\documentclass{article}

\usepackage{microtype}
\usepackage{graphicx}
\usepackage{subcaption}
\usepackage{booktabs} 

\usepackage{hyperref}

\usepackage[accepted]{icml2026}

\usepackage{amsmath}
\usepackage{amssymb}
\usepackage{mathtools}
\usepackage{amsthm}
\usepackage{xurl}

\usepackage[capitalize,noabbrev]{cleveref}

\theoremstyle{plain}
\newtheorem{theorem}{Theorem}[section]
\newtheorem{proposition}[theorem]{Proposition}
\newtheorem{lemma}[theorem]{Lemma}

\theoremstyle{definition}
\newtheorem{definition}[theorem]{Definition}
\newtheorem{assumption}[theorem]{Assumption}
\theoremstyle{remark}

\usepackage[textsize=tiny]{todonotes}

\usepackage{multicol}
\usepackage{multirow}
\usepackage{wrapfig}
\usepackage[table]{xcolor}
\usepackage{pifont}
\usepackage{soul}
\usepackage{enumitem}
\icmltitlerunning{Unveiling the Entropy Dynamics of Chain-of-Thought Reasoning}

\begin{document}

\twocolumn[
  \icmltitle{Unveiling the Entropy Dynamics of Chain-of-Thought Reasoning}



  \icmlsetsymbol{equal}{*}

  \begin{icmlauthorlist}
    \icmlauthor{Ting Xu}{cuhk}
    \icmlauthor{Xu He}{comp}
    \icmlauthor{Yupu Lu}{hku}
    \icmlauthor{Jiankai Sun}{cuhk}
    \icmlauthor{Dong Li}{comp}
    \icmlauthor{Wai Lam}{cuhk}
    \icmlauthor{Jianye Hao}{comp}
  \end{icmlauthorlist}

  \icmlaffiliation{cuhk}{The Chinese University of Hong Kong}
  \icmlaffiliation{comp}{Huawei Technologies Ltd}
  \icmlaffiliation{hku}{School of Computing and Data Science, The University of Hong Kong}

  \icmlcorrespondingauthor{Wai Lam}{wlam@se.cuhk.edu.hk}

  \icmlkeywords{Chain-of-Thought, Entropy, Early Exit, Test-Time Scaling}

  \vskip 0.3in
]



\printAffiliationsAndNotice{}  

\newcommand{\fix}{\marginpar{FIX}}
\newcommand{\new}{\marginpar{NEW}}

\newcommand{\xut}[1]{\textcolor{red}{#1}}
\newcommand{\bry}[1]{\textcolor{blue}{#1}}
\newcommand{\method}{\text{CUSUM}}


\begin{abstract}
This paper investigates the entropy dynamics of Chain-of-Thought (CoT) and  
uncovers a consistent two-phase structure: an \textit{Uncertainty Region} of exploration transitioning sharply to a \textit{Confidence Region} of convergence.
We demonstrate that the Confidence Region possesses two critical properties: 1) \textit{High Reliability}---answers in the confidence region become highly accurate and stable, and 2) \textit{High Redundancy}---models generate unnecessary tokens long after reaching the correct answer.
These properties unlock more efficient and reliable inference strategies: 
1) \textit{Early Exit} leverages reliability and redundancy to terminate computation safely when returns diminish, and 2) \textit{Test-Time Scaling} uses the Confidence Region signal to prioritize converged trajectories.
To operationalize these insights, we formulate Confidence Region detection as a sequential change-point detection problem, being the first to apply classical change-point methods to monitor CoT reasoning. Using the Cumulative Sum (CUSUM) algorithm, a statistically optimal change-point detector, we develop a training-free framework for real-time inference control.
Experiments show our approach establishes a superior Pareto-frontier for early exit. 
CUSUM achieves 63.06\% accuracy with 11.1\% token reduction, outperforming DEER and Dynasor by  
  3.28\% and 4.36\% in accuracy respectively. 
For test-time scaling, CUSUM-weighted voting consistently outperforms self-consistency.
\end{abstract}

\begin{figure*}[t]
    \centering 

    \includegraphics[width=0.95\textwidth]{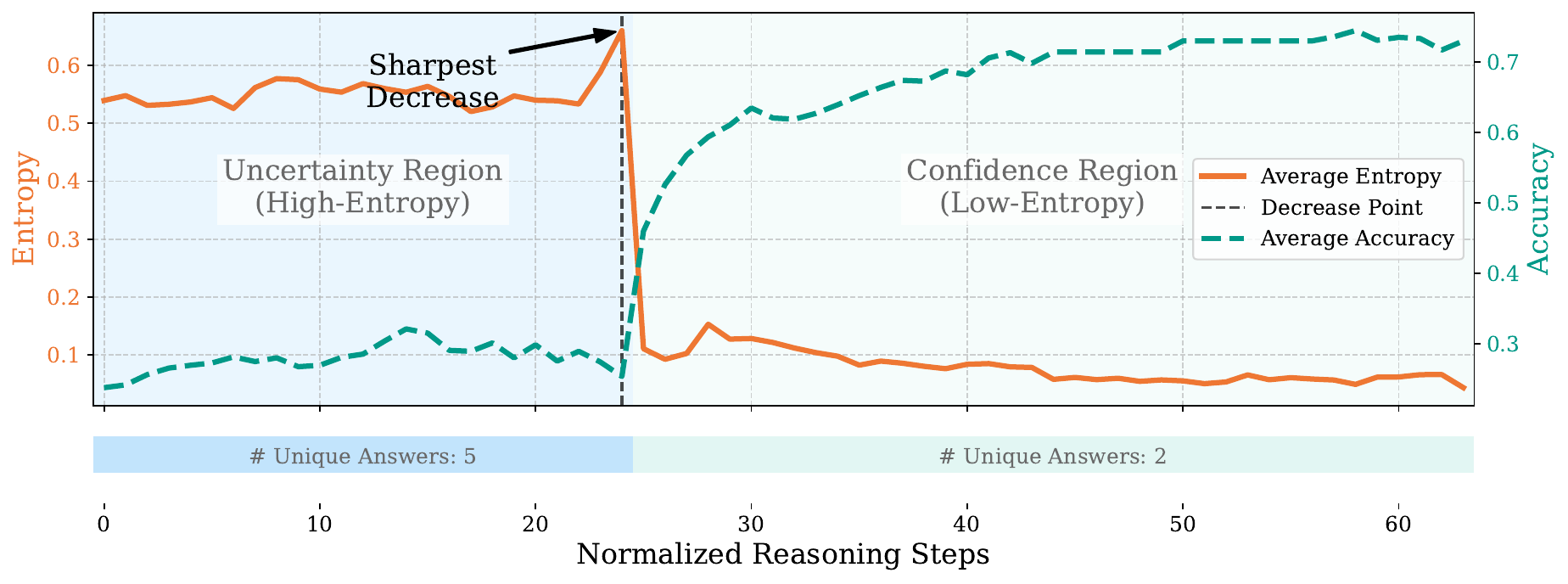}
\vspace{-0.2cm}
\caption{
\textbf{Dynamics of Entropy and Accuracy on Qwen3-4B-Thinking-2507}: CoT reasoning exhibits a two-phase structure: (1) an Uncertainty Region, where high entropy and diverse answers reflect exploration of multiple logical paths, and (2) a Confidence Region, where entropy collapse and answer stabilization signal convergence toward a reliable solution.
}
    \label{fig:main}
\end{figure*}

\section{Introduction}
Chain-of-Thought (CoT) reasoning \citep{CoT} has unlocked remarkable capabilities in Large Language Models (LLMs), enabling them to solve complex tasks in mathematics, science, and coding \citep{deepseekmath, guo2025deepseekr1, merrill2024cotpower, team2025kimik2, gemini}. 
Despite this empirical success, understanding how CoT reasoning unfolds remains an open question. 
Recent research has approached this through theoretical expressiveness, internal mechanisms, and semantic transitions \citep{liu2024chainofthought, CoTunderstanding, li2025cotvectors, yu2024explainable}. These studies treat the CoT reasoning as discrete segments and focus on understanding what happens within individual steps, layers, or local state transitions—offering insights into the  local properties of reasoning. 

However, reasoning is not merely a sequence of independent steps—it is a continuous process that evolves from initial exploration toward a final solution \citep{yao2023tot}. While existing work provides insights into local transitions and components, understanding the \textit{global dynamics} of this process remains a challenge: \textit{How does reasoning progress from exploration to convergence?} 
Answering this question is fundamental to deciphering model reasoning—distinguishing whether a solution emerges through gradual refinement or via a sudden "Aha" moment \citep{aha2025}.


In response, we view LLM reasoning as a process of uncertainty reduction, a perspective shared by \citet{wu2024rethinking}. In this framework, CoT reasoning steps serve as incremental information gains; each step reduces ambiguity by providing a more structured context for subsequent predictions \citep{da2025uncertainty, wu2024rethinking}.
To quantify this uncertainty reduction, we track the model's confidence in its final answer at each reasoning step through predictive entropy---a measure that captures the model's distributional uncertainty over possible answers.
Specifically, high predictive entropy denotes active exploration among competing hypotheses, while a low predictive entropy indicates a deterministic convergence \citep{wang80rule, zhang2025entropy}. Thus, the predictive entropy dynamics allow us to pinpoint the exact evolution of reasoning. To the best of our knowledge, this is the first systematic analysis of entropy dynamics throughout the CoT generation process.

We discover that CoT reasoning follows a two-phase structure rather than a gradual progression: a high-entropy \textit{Uncertainty Region} of stochastic exploration followed by an abrupt transition into a low-entropy \textit{Confidence Region} of deterministic convergence. Crucially, the Confidence Region exhibits two key properties that make it a prime candidate for inference optimization: \textit{High Reliability}---answers become highly accurate and stable upon entering this region---and \textit{High Redundancy}---models continue generating redundant tokens well after reaching the correct answer. Together, these findings answer our opening question: reasoning evolves by abruptly shifting from stochastic exploration to deterministic convergence. After the model enters the Confidence Region, the reasoning is largely complete; continued generation typically yields diminishing returns.

These properties unlock two efficient and reliable inference strategies: \textit{Early Exit} leverages reliability for safe termination while exploiting redundancy to prune unnecessary computation, and \textit{Test-Time Scaling} leverages reliability to distinguish converged trajectories from those still in exploration. Both applications require real-time detection of when this transition occurs.

We formalize this detection task as a \textit{sequential change-point detection} problem, bridging CoT dynamics with classical sequential analysis theory \citep{berger2008sequential}. To the best of our knowledge, we are the first to apply classical change-point detection methods to monitor CoT reasoning. Specifically, we adopt the Cumulative Sum (CUSUM) algorithm \citep{lorden1971procedures}, a statistically optimal method that recursively aggregates log-likelihood ratios to monitor evidence of convergence. This training-free approach enables efficient early exit and test-time scaling by reliably identifying the Confidence Region transition in real-time.

We evaluate our method in a diverse suite of models—DeepSeek-R1-Distill-Qwen-7B, Qwen3-4B-Thinking-2507, and Qwen3-14B—on rigorous reasoning benchmarks, including AIME24, AIME25, and GPQA-Diamond. In early-exit scenarios, our approach consistently outperforms competitive baselines including DEER \citep{yang2025dynamic} and Dynasor \citep{fu2024efficiently}, achieving superior Pareto frontiers across all models. It achieves an average accuracy of 63.06\% with 11.1\% token reduction, outperforming DEER (59.78\%) and Dynasor (58.70\%) by 3.28\% and 4.36\% in accuracy respectively. In test-time scaling, CUSUM-weighted voting consistently surpasses standard self-consistency, with the performance gap widening as the number of sampled trajectories increases. This improvement is driven by the final CUSUM score’s ability to serve as a reliable quality indicator, as evidenced by the clear statistical separation between the scores of correct and incorrect trajectories. These results demonstrate that by identifying the transition to the confidence region, our method successfully eliminates computational waste and distills high-quality reasoning, establishing a more efficient and reliable method for LLM inference.

\section{Unveiling Entropy Dynamics}\label{sec:dynamics}

To model the dynamics from exploration to convergence, we use predictive entropy to quantify uncertainty about the answer at each iteration.
For an input $X$, the model generates a sequence of reasoning steps $\{T_i | i = 1, 2, ...\}$, where $T_i$ is a sequence of $k$ tokens conditioned on the input and all preceding steps: $T_i \sim \text{LLM}(T_i | X, T_{1:i-1})$. At each step $i$, we obtain an intermediate answer after $T_i$:
\begin{equation}
    A_i \sim \text{LLM}(A_i|X, T_{1:i}, \mathcal{I}_{\text{ans}}),
\end{equation}
where $A_i$ consists of $m_i$ tokens $(a_{i, 1},\cdots a_{i, {m_i}} )$ and $\mathcal{I}_{\text{ans}}$ is an answer-inducing prompt, e.g., \texttt{</think> The answer is \textbackslash boxed}. The predictive entropy of $A_i$ is computed as:
\begin{equation}
\mathcal{H}_i = -\frac{1}{m_i} \sum_{j=1}^{m_i} \sum_{u \in \mathcal{V}} \mathcal{P}_j(u | \cdot) \log \mathcal{P}_j(u | \cdot),
\end{equation}
where $\mathcal{V}$ is the vocabulary set, and $\mathcal{P}_j(u|\cdot)=\mathcal{P}(u | X, T_{1:i}, \mathcal{I}_{\text{ans}}, a_{i,1:j-1})$ is the probability of token $u$ from the LLM. High entropy indicates exploration; low entropy indicates convergence.


\paragraph{Setup.}To investigate the dynamics of CoT, we analyze predictive entropy $\mathcal{H}_i$ in 100 randomly selected instances from the Bespoke-Stratos-17k  dataset \citep{bespoke_stratos}. This dataset provides a heterogeneous collection of tasks spanning coding, mathematics, science, and logic, ensuring the diversity of our observations. To give a population-level visualization, we align trajectories of varying lengths at their point of sharpest entropy decrease and normalize the steps and entropies via linear interpolation. 

\paragraph{Observation and Formulation.}The aggregated entropy dynamics for Qwen3-4B-Thinking-2507 (Figure \ref{fig:main}) reveals a distinct two-phase structure. We identify the \textit{Uncertainty Region}, characterized by high-entropy exploration, followed by an abrupt transition to the \textit{Confidence Region} defined by stable low entropy indicative of solution convergence.
We formalize this two-phase structure as a regime shift model. Let $\{\mathcal{H}_i \mid i = 1, 2, ...\}$ denote the sequence of predictive entropies observed at each reasoning step. We model this sequence as switching between two stochastic regimes at a change-point $\nu$:
\begin{equation}\label{eq:regime-shift}
    \mathcal{H}_i \sim 
    \begin{cases} 
    f_0 & \text{if } i < \nu \quad (\text{Uncertainty Region}) \\
    f_1 & \text{if } i \ge \nu \quad (\text{Confidence Region})
    \end{cases}
\end{equation}

where $f_0$ and $f_1$ are the probability distribution of Uncertainty and Confidence regions, respectively. 

To demonstrate the universality of this phenomenon, we provide additional visualizations for DeepSeek-R1-Distill-Qwen-7B and Qwen3-14B in Appendix Figures~\ref{fig:entropy-acc-dynamics-DeepSeek-R1-Distill-Qwen-7B} and \ref{fig:entropy-acc-dynamics-Qwen3-14B}, as well as individual trajectories in Figure~\ref{fig:entropy-case}. These visualizations confirm that the abrupt transition is a consistent property across different model architectures and scales.


\paragraph{Characterizing the Transition.}
To understand the factors governing this regime shift, we analyze the normalized change-point $\nu_{\text{norm}} = \nu / L$ (where $L$ denotes the total trajectory length) across models, difficulty levels. Table~\ref{tab:transition} summarizes results computed over 100 calibration instances.
Three key findings emerge from this analysis.
\textit{(1) Problem difficulty governs transition timing.} On easy problems, models enter the Confidence Region within the first third of the trajectory ($30$--$34\%$); on hard problems, the transition is delayed to the middle-to-late stage ($52$--$56\%$). This indicates that the \textit{High Redundancy} property identified above is predominantly a phenomenon of easy instances---once the model converges early, subsequent reasoning steps contribute diminishing marginal information.
\textit{(2) Stronger models converge earlier.} Across all difficulty levels, stronger models exhibit earlier transition points (Qwen3-14B: 30\%/52\% vs. DeepSeek-R1-Distill-Qwen-7B: 33.9\%/56.4\%). This suggests that model capacity directly accelerates the exploration-to-convergence transition — stronger models need fewer reasoning steps to resolve uncertainty. 
\textit{(3) Earlier convergence correlates with correctness, particularly for distilled models.} The negative correlation between $\nu_{\text{norm}}$ and accuracy is strongest for the distilled 7B model ($r = -0.45$) and weakest for the natively trained 14B model ($r = -0.26$). We hypothesize that distilled models possess less robust representations, coupling convergence timing more tightly with reasoning quality.

\begin{table}[t]
\centering
 \caption{Normalized change-point $\nu_{\text{norm}}$ and its Pearson correlation with final accuracy, stratified by model and difficulty level. DeepSeek-7B and Qwen3-4B are shorthand for DeepSeek-R1-Distill-Qwen-7B and Qwen3-4B-Thinking-2507, respectively.} 
\label{tab:transition}
\small
\begin{tabular}{@{}lccc@{}}   
\toprule
\textbf{Model} & $\nu_{\text{norm}}$ (Easy) & $\nu_{\text{norm}}$ (Hard) & Corr($\nu_{\text{norm}}$, Acc) \\
\midrule
DeepSeek-7B & 33.9\% & 56.4\% & $-0.45$ \\
Qwen3-4B & 31.0\% & 54.0\% & $-0.36$ \\
Qwen3-14B & 30.0\% & 52.0\% & $-0.26$ \\
\bottomrule
\end{tabular}
\end{table}

\paragraph{Properties and Applications of the Confidence Region.}
Figure~\ref{fig:main} reveals two key properties of the Confidence Region that enable inference optimization. 
1. \textit{High Reliability}: The Uncertainty Region exhibits unreliable exploration with low accuracy ($<20\%$). Upon entering the Confidence Region, accuracy surges sharply to a high plateau ($>60\%$), indicating stable reasoning.
2. \textit{High Redundancy}: Although the correct answer typically emerges early in the Confidence Region, models continue generating substantial redundant tokens—often exceeding 30\% of the total trajectory length—well beyond solution discovery.

These properties enable two efficient inference strategies. \textbf{Early Exit} exploits both reliability (safe termination upon entering the Confidence Region) and redundancy (eliminating wasteful computation, as continued generation yields minimal accuracy gains). \textbf{Test-Time Scaling} leverages reliability as a trajectory quality indicator: trajectories that successfully transition to the Confidence Region (finite $\nu$) demonstrate stronger reasoning than those remaining indefinitely in the Uncertainty Region ($\nu = \infty$), enabling principled weighting beyond uniform self-consistency \citep{wang2023selfconsistency}. Both strategies require real-time detection of the Confidence Region transition.

\begin{figure*}[t]
        \centering
        \includegraphics[width=0.9\textwidth]{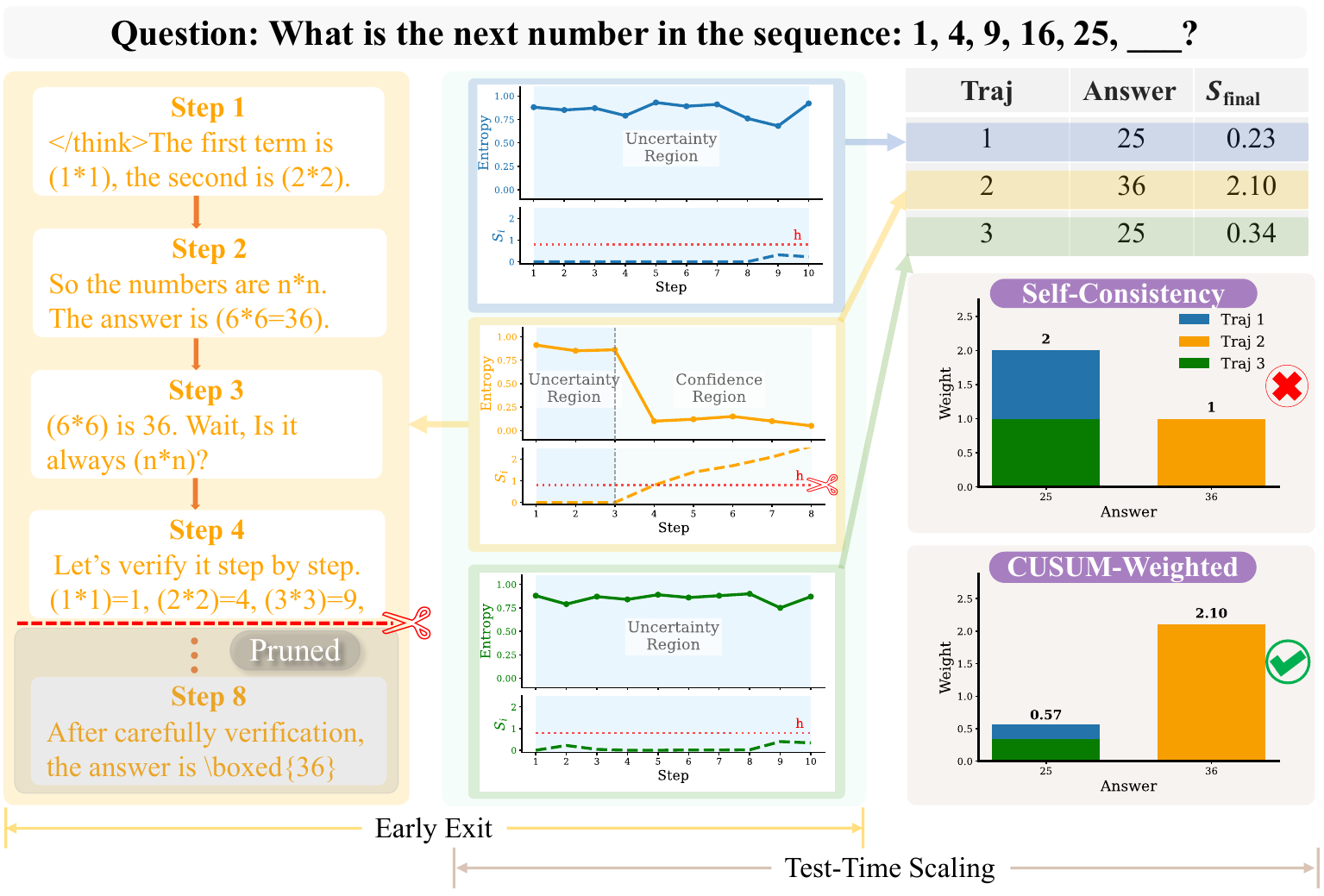} 

    \caption{
\textbf{Illustration of Confidence Region Detection for Efficient and Reliable Inference.} \textbf{(Center)} CUSUM statistics track the transition from the high-entropy Uncertainty Region to the low-entropy Confidence Region, where $S_i$ accumulates the log-likelihood ratio between the confidence and uncertainty hypotheses. \textbf{(Left) Early Exit:} By monitoring $S_i$ in real-time, we trigger early exit once cumulative evidence exceeds threshold $h$ (red dashed line). This enables pruning redundant computation immediately upon entering the Confidence Region while maintaining high reliability. \textbf{(Right) Test-Time Scaling:} The final CUSUM statistic $S_{\text{final}}$
 serves as a trajectory quality indicator. Unlike Self-Consistency, which assigns equal weight to all trajectories, CUSUM-Weighted voting prioritizes trajectories that successfully converge to the Confidence Region (higher $S_{\text{final}}$ values), yielding more robust answer aggregation.
    }
    \label{fig:method_case} 
\end{figure*}

\section{Confidence Region Detection}\label{sec:method}
We formalize the detection of the Confidence Region—identifying the unknown transition time $\nu$—as a sequential change-point detection problem, grounded in sequential analysis theory \citep{lorden1971procedures}.

\subsection{Optimization Objective}
Our goal is to detect the Confidence Region transition as quickly as possible after it occurs (minimizing delay to exploit redundancy) while avoiding premature detection (controlling false alarms to preserve reliability).

Since the LLM generates tokens autoregressively, the detection rule must be online. We define the detection time $\tau$ as $\{\mathcal{F}_t\}$-adapted, where $\mathcal{F}_t = \sigma (\mathcal{H}_1,\ldots,\mathcal{H}_t)$ represents the information available up to step $t$. The optimization objective is:

\begin{equation}\label{eq:minimax}
\begin{aligned}
    \min_{\tau} \quad & \mathcal{D}_L(\tau) \\
    \text{s.t.} \quad & \mathbb{E}_\infty[\tau] \ge \gamma, 
\end{aligned}
\end{equation}

where $\mathcal{D}_L(\tau) \triangleq \sup_{\nu \ge 1} \operatorname{ess\,sup}\; \mathbb{E}_\nu\!\left[(\tau-\nu)^+ \mid \mathcal{F}_{\nu-1}\right]$ is the worst-case average detection delay (WADD); $\mathbb{E}_\nu[\cdot]$ and $\mathbb{E}_\infty[\cdot]$ denote expectations under a change at step $\nu$ and under no change, respectively; and $(\tau - \nu)^+ = \max(\tau - \nu,\, 0)$. The objective minimizes the worst-case detection delay over all possible change-points $\nu$, while the constraint $\mathbb{E}_\infty[\tau] \ge \gamma$ controls false alarms by requiring the expected stopping time to be at least $\gamma$ when no change occurs (i.e., trajectories remain indefinitely in the Uncertainty Region).

This minimax formulation is motivated by two properties from Section~\ref{sec:dynamics}:1. \textit{Minimizing delay to exploit redundancy}: The Confidence Region's high redundancy (Figure~\ref{fig:main}) means detection delay wastes computation, so minimizing $\mathcal{D}_L(\tau)$ maximizes savings. 2. \textit{Controlling false alarms to preserve reliability}: Given the sharp accuracy contrast ($<20\%$ in Uncertainty vs. $>60\%$ in Confidence, Figure~\ref{fig:main}), premature detection terminates reasoning while answers are unreliable. The constraint $\mathbb{E}_\infty[\tau] \ge \gamma$ bounds false alarms, ensuring safe termination.     

\subsection{Method: CUSUM}
Following the two-regime model in Eq.~\eqref{eq:regime-shift} and the minimax objective in Eq.~\eqref{eq:minimax}, we employ the Cumulative Sum (CUSUM) algorithm \citep{page1954continuous}, which is established as the asymptotically minimax-optimal solution.
First, to quantify evidence for the regime shift, CUSUM defines the per-step log-likelihood ratio:
\begin{equation}\label{eq:llr}
Z_i = \log \frac{f_1(\mathcal{H}_i)}{f_0(\mathcal{H}_i)},
\end{equation}
which measures instantaneous evidence that the reasoning step $i$ belongs to the Confidence Region ($f_1$) rather than the Uncertainty Region ($f_0$).

\vspace{0.5em}
\begin{definition}[CUSUM Statistic and Stopping Rule]\label{def:cusum}
The CUSUM statistic $S_i$ is defined recursively by:
\begin{equation}\label{eq:cusum-recursion}
S_0 = 0, \quad S_i = \max(0, S_{i-1} + Z_i), \quad i \ge 1.
\end{equation}
Equivalently, $S_i = \max_{1 \le k \le i} \sum_{t=k}^i Z_t$, representing the maximum cumulative log-likelihood ratio at all possible starting points. 
We define the detection step using a pre-defined threshold $h > 0$, which controls the trade-off between detection speed and false-alarm robustness:
\begin{equation}\label{eq:cusum-stopping}
\tau_h = \inf\{i \ge 1 : S_i \ge h\}.
\end{equation}
\end{definition}

\subsection{Feasibility: Identifiability Condition}

For CUSUM to detect the Confidence Region, the two regions must be statistically distinguishable:

\begin{assumption}[Statistical Separability]\label{ass:identifiability}
The entropy distributions $f_0$ (Uncertainty Region) and $f_1$ (Confidence Region) satisfy:
\begin{equation}
\mathcal{D}_{\text{KL}}(f_0 \| f_1) > 0 \quad \text{and} \quad \mathcal{D}_{\text{KL}}(f_1 \| f_0) > 0
\end{equation}
where $\mathcal{D}_{\text{KL}}(f \| g) = \int f(x) \log \frac{f(x)}{g(x)} dx$.
\end{assumption}

Necessity is immediate: indistinguishable distributions ($\mathcal{D}_{\text{KL}}(f_0\| f_1)=0$) cannot be reliably detected by any algorithm. Sufficiency follows from the drift separation property of the log-likelihood ratio:

\begin{lemma}[Drift Separation]\label{lemma:identifiability}
Under Assumption~\ref{ass:identifiability}, the log-likelihood ratio $Z_i = \log \frac{f_1(\mathcal{H}_i)}{f_0(\mathcal{H}_i)}$ has opposite expected signs:
\begin{equation}
\begin{split}
  \mathbb{E}_0[Z_i] &= -\mathcal{D}_{\mathrm{KL}}(f_0 \| f_1) < 0, \\ 
  \mathbb{E}_1[Z_i] &= \mathcal{D}_{\mathrm{KL}}(f_1 \| f_0) > 0.
\end{split}
\end{equation}
\end{lemma}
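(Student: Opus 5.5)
The plan is to compute the two expectations directly from the definition of the Kullback--Leibler divergence in Assumption~\ref{ass:identifiability}, with no auxiliary inequality. Here $\mathbb{E}_0$ and $\mathbb{E}_1$ denote expectation when $\mathcal{H}_i$ has density $f_0$ and $f_1$ respectively, which is the case for $i<\nu$ and $i\ge\nu$ under the regime-shift model~\eqref{eq:regime-shift}. Since $Z_i$ depends only on $\mathcal{H}_i$, each expectation reduces to a single integral against the relevant density.

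\textbf{First case ($f_0$).} Under $f_0$ we write
\begin{equation*}
\mathbb{E}_0[Z_i]=\int f_0(x)\log\frac{f_1(x)}{f_0(x)}\,dx=-\int f_0(x)\log\frac{f_0(x)}{f_1(x)}\,dx=-\mathcal{D}_{\mathrm{KL}}(f_0\|f_1).
\end{equation*}
Strict negativity then follows immediately from the first inequality in Assumption~\ref{ass:identifiability}.

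\textbf{Second case ($f_1$).} Under $f_1$ the computation is simply the definition:
\begin{equation*}
\mathbb{E}_1[Z_i]=\int f_1(x)\log\frac{f_1(x)}{f_0(x)}\,dx=\mathcal{D}_{\mathrm{KL}}(f_1\|f_0)>0,
\end{equation*}
again by Assumption~\ref{ass:identifiability}.

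\textbf{Well-definedness.} The only genuine care needed is to make sure these integrals exist, which I would state as a standing convention. First, $f_0$ and $f_1$ should be mutually absolutely continuous, or at least the divergences should be finite, so that $Z_i$ is a.s.\ finite under each regime; the usual conventions $0\log 0=0$ apply. Second, $Z_i$ must be integrable: under $f_0$, its positive part is bounded via $\log y\le y-1$ by $\int f_1 \le 1$, so $\mathbb{E}_0[Z_i]$ is well defined in $[-\infty,\infty)$. Under $f_1$, the negative part is controlled symmetrically, so $\mathbb{E}_1[Z_i]$ is well defined in $(-\infty,\infty]$.

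The main obstacle is therefore only these measure-theoretic edge cases, namely infinite divergence or disjoint supports. I would handle them by assuming finite KL divergences, which is implicit in using $Z_i$ as a CUSUM increment at all. With this in place the statement is an exact identity plus the positivity hypothesis, and no Gibbs' inequality is needed, since positivity is assumed rather than derived.
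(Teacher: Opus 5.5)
Your proposal is correct and takes the same route as the paper: both write $\mathbb{E}_0[Z_i]$ and $\mathbb{E}_1[Z_i]$ as a single integral against $f_0$ or $f_1$, identify them as $-\mathcal{D}_{\mathrm{KL}}(f_0\|f_1)$ and $\mathcal{D}_{\mathrm{KL}}(f_1\|f_0)$ respectively, and take the strict signs directly from Assumption~\ref{ass:identifiability}. The only difference is that the paper passes through the split into cross-entropy and entropy, $-H(f_0,f_1)+H(f_0)$, whereas your direct identification, together with the integrability remark via $\log y\le y-1$, does not require the individual differential entropies to be finite.
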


\begin{proof}
See Appendix~\ref{sec:proof-identifiability}.
\end{proof}

This sign reversal enables CUSUM detection: $S_t$ drifts downward in the Uncertainty Region (staying below threshold) and upward in the Confidence Region (triggering detection at threshold $h$).

\textbf{Empirical Validation.} Table~\ref{tab:kl-divergence} confirms strictly positive KL divergences across all models, validating the feasibility of CUSUM-based detection.

\subsection{Properties: Theoretical Guarantees}
\label{sec:properties}
CUSUM provides two key theoretical guarantees: (1) \textit{controllable false alarm rate}---ensuring we don't terminate prematurely during exploration, and (2) \textit{minimal detection delay}---stopping as soon as possible after convergence. Together, these properties maximize computational savings while maintaining reliability.

\begin{algorithm}[t]
   \caption{CUSUM-based Early Exit}
   \label{alg:early_exit}
\begin{algorithmic}
   \STATE {\bfseries Input:} Input prompt $X$, Threshold $h$, Distribution estimates $f_0, f_1$
   \STATE {\bfseries Initialize:} $S_0 = 0$
   \FOR{$i = 1, 2, \dots$}
   \STATE Generate next step $T_i$ and intermediate answer $A_i$
   \STATE Compute entropy $\mathcal{H}_i$, log-likelihood ratio $Z_i$
   \STATE Update CUSUM statistic: $S_i = \max(0, S_{i-1} + Z_i)$
   \IF{$S_i \ge h$}
   \STATE \textbf{break} \COMMENT{Detect Confidence Region Entry}
   \ENDIF
   \ENDFOR
   \STATE {\bfseries Return:} Generated trajectory $T_{1:i}$ and Answer $A_i$
\end{algorithmic}
\end{algorithm}

\begin{algorithm}[t]
   \caption{CUSUM-Weighted Test-Time Scaling}
   \label{alg:tts}
\begin{algorithmic}
   \STATE {\bfseries Input:} Input prompt $X$, Number of trajectories $N$, Distribution estimates $f_0, f_1$
   \STATE {\bfseries Initialize:} Answer weights $W = \{\}$
   \FOR{$n = 1$ {\bfseries to} $N$}
   \STATE Generate trajectory $\text{Traj}^{(n)}$ of $L^{(n)}$ steps and extract final answer $A^{(n)}$
   \STATE Compute log-likelihood ratio sequence $\{Z_i^{(n)}\}$
   \STATE Compute final CUSUM score: $S_{\text{final}}^{(n)} = \max_{k} \sum_{i=k}^{L^{(n)}} Z_i^{(n)}$
   \STATE Update weight for $A^{(n)}$: $W[A^{(n)}] \leftarrow W[A^{(n)}] + S_{\text{final}}^{(n)}$
   \ENDFOR
   \STATE {\bfseries Return:} $\arg\max_a W[a]$
\end{algorithmic}
\end{algorithm}

\begin{proposition}[False Alarm Control~\citep{wald1947sequential,siegmund1985sequential, xu2021optimum}]\label{prop:false-alarm}
Under the no-change measure $\nu =\infty$, the expected time to false alarm satisfies:
\begin{equation}
e^h \leq \mathbb{E}_\infty[\tau_h] \leq C \cdot e^h \quad \text{as } h \to \infty,
\end{equation}
where $C$ is a constant depending on the KL divergence.
\end{proposition}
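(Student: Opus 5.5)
The lower bound $\mathbb{E}_\infty[\tau_h]\ge e^h$ comes from a Wald-type likelihood-ratio martingale argument. The upper bound $\mathbb{E}_\infty[\tau_h]\le C e^h$ comes from a renewal (repeated-trial) argument on the reflected random walk $S_i$. Throughout we assume that under $\nu=\infty$ the entropies $\mathcal{H}_i$ are i.i.d.\ with density $f_0$. By Lemma~\ref{lemma:identifiability}, the increments $Z_i$ are then i.i.d.\ with negative drift $-\mathcal{D}_{\mathrm{KL}}(f_0\|f_1)$, and they satisfy $\mathbb{E}_0[e^{Z_i}]=\int f_1=1$. We also assume absolute continuity and a finite second moment of $Z_i$, so that overshoots are controlled.

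\textbf{Lower bound.} Consider the Shiryaev--Roberts-type quantity $R_n=\sum_{k=1}^n \prod_{t=k}^n e^{Z_t}$. Because $\mathbb{E}_0[e^{Z_t}]=1$, the process $R_n-n$ is a zero-mean $\mathcal{F}_n$-martingale under $\mathbb{P}_\infty$.

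We have $e^{S_n}=\max_k \prod_{t=k}^n e^{Z_t}\le R_n$, and this bound also covers the reset to $0$ since $R_n\ge e^{Z_n}$ up to the trivial case. Hence at $\tau_h$ we get $R_{\tau_h}\ge e^{S_{\tau_h}}\ge e^h$.

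If $\mathbb{E}_\infty[\tau_h]=\infty$ the lower bound is trivial. Otherwise optional stopping applies: the martingale increments are $R_n-R_{n-1}-1$, whose conditional expectation is bounded given integrability. This gives $\mathbb{E}_\infty[\tau_h]=\mathbb{E}_\infty[R_{\tau_h}]\ge e^h$.

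\textbf{Upper bound.} View CUSUM as repeated one-sided SPRTs. Each cycle starts at $S=0$ and runs the random walk $W_n=\sum_{t\le n}Z_t$ until it either drops below $0$ (restart) or exceeds $h$ (alarm).

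By the strong Markov property, $\tau_h$ is a sum of a geometric number of i.i.d.\ cycles. Wald's identity then gives
\[
\mathbb{E}_\infty[\tau_h]=\frac{\mathbb{E}_\infty[N]}{\mathbb{P}_\infty(\text{cycle reaches } h)},
\]
where $N$ is the length of one cycle. Because of the negative drift, $\mathbb{E}_\infty[N]\le c_1$ uniformly in $h$, for instance by Wald's identity, $\mathbb{E}[N]\le(\mathbb{E}[\text{overshoot}]+h)/\mathcal{D}_{\mathrm{KL}}(f_0\|f_1)$ for the two-sided exit, refined by the fact that most cycles end quickly.

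For the denominator, a change of measure to $f_1$ gives
\[
\mathbb{P}_\infty(W \text{ hits } h \text{ before } 0)=\mathbb{E}_1\!\left[e^{-W_{T}};\, W_T\ge h\right]=e^{-h}\,\mathbb{E}_1\!\left[e^{-(W_T-h)};\,W_T\ge h\right].
\]
Under $f_1$ the walk has positive drift $\mathcal{D}_{\mathrm{KL}}(f_1\|f_0)$, so $\mathbb{P}_1(W_T\ge h)\to \mathbb{P}_1(\text{walk never goes below } 0)>0$. Renewal theory (nonlinear renewal, Siegmund) shows the overshoot $W_T-h$ converges in distribution with finite mean. Hence the bracketed expectation converges to a positive constant $c_2$.

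Combining the two pieces gives $\mathbb{E}_\infty[\tau_h]\le (c_1/c_2)e^h(1+o(1))$. The constant $C$ depends on the KL divergences through the drift and the overshoot and ladder-height constants.

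\textbf{Main obstacle.} The delicate step is the uniform positivity of $\mathbb{E}_1[e^{-\text{overshoot}};W_T\ge h]$. It requires the renewal theorem for the ladder heights, which needs nonarithmetic $Z_i$ and finite variance, and it must handle the $o(1)$ terms rigorously. If this fails, I would fall back on the cruder Lorden argument, bounding the overshoot's exponential moment directly by a Cram\'er-type condition on $Z_i$. That gives a possibly larger but still $h$-independent constant $C$, which is all the statement requires.
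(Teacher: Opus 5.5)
The paper gives no proof of this proposition; it only cites Wald, Siegmund and Xu. So the benchmark is the classical argument in those sources, and your outline reproduces it. For the upper bound you use Page's regenerative representation $\mathbb{E}_\infty[\tau_h]=\mathbb{E}_\infty[N]/\mathbb{P}_\infty(\text{cycle reaches } h)$ with a change of measure to $f_1$. For the lower bound you use a likelihood-ratio martingale: the Shiryaev--Roberts martingale $R_n-n$, where Lorden instead uses ``CUSUM is a minimum of one-sided SPRTs'' plus Wald's inequality $\mathbb{P}_\infty(\sup_n W_n\ge h)\le e^{-h}$. Either works. However, two steps do not go through as written.

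\textbf{Optional stopping.} Your justification is incorrect. The conditional increment of $R_n-n$ has absolute mean $(1+R_{n-1})\,\mathbb{E}_0|e^{Z}-1|$. On $\{\tau_h\ge n\}$ the constraint $S_{n-1}<h$ only bounds each summand of $R_{n-1}$ by $e^h$, so $R_{n-1}$ is not bounded. For example, a run of $Z_t=0$, which histogram densities allow, keeps $S_n=0$ while $R_n=n$. You only need $\mathbb{E}_\infty[R_{\tau_h}]\le\mathbb{E}_\infty[\tau_h]$, and this follows from $\mathbb{E}_\infty[R_{\tau_h\wedge n}]=\mathbb{E}_\infty[\tau_h\wedge n]$, $R_n\ge 0$ and Fatou's lemma.

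\textbf{Cycle length.} This is the step that carries the $e^h$ rate. Your bound $\mathbb{E}_\infty[N]\le(\mathbb{E}[\text{overshoot}]+h)/\mathcal{D}_{\mathrm{KL}}(f_0\|f_1)$ grows linearly in $h$. Inserted into the renewal identity it gives only $\mathbb{E}_\infty[\tau_h]=O(he^h)$, and ``most cycles end quickly'' does not close the gap. The $h$-free bound comes from the one-sided comparison $N\le N_0:=\inf\{n\ge 1: W_n\le 0\}$. Here $N_0$ is the first weak descending ladder epoch of a walk with negative drift, so $c_1:=\mathbb{E}_\infty[N_0]<\infty$ and it does not involve $h$.

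\textbf{The step you call the main obstacle is easier than you fear.} You need a lower bound on $\mathbb{E}_1[e^{-(W_T-h)};W_T\ge h]$ that is uniform in $h$, not its limit. So neither the renewal theorem nor nonarithmeticity is required. Set $q:=\mathbb{P}_1(W_n>0\ \forall n)>0$; then $\mathbb{P}_1(W_T\ge h)\ge q$. Under your second-moment assumption, Lorden's overshoot bound gives $\mathbb{E}_1[W_T-h;W_T\ge h]\le \mathbb{E}_1[(Z^+)^2]/\mathcal{D}_{\mathrm{KL}}(f_1\|f_0)=:\bar r$. Jensen then yields $\mathbb{E}_1[e^{-(W_T-h)};W_T\ge h]\ge q\,e^{-\bar r/q}$ for every $h$. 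With histogram densities positive on every bin, $Z$ is bounded and the overshoot is at most $\max Z$, which is simpler still. You also only use the inequality $\mathbb{P}_\infty(A)\ge\mathbb{E}_1[e^{-W_T};A]$, which holds without assuming $f_0\ll f_1$.

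With these repairs your argument gives the non-asymptotic bound $\mathbb{E}_\infty[\tau_h]\le c_1q^{-1}e^{\bar r/q}\,e^h$. Pushing the same decomposition to the limit via the renewal theorem and the Wiener--Hopf identities recovers Siegmund's sharp constant $C=1/(\mathcal{D}_{\mathrm{KL}}(f_1\|f_0)\,\zeta^2)$ in the nonarithmetic case. Here $\zeta$ is the Laplace transform at $1$ of the limiting overshoot. This confirms your remark that $C$ depends on more than the KL divergences.
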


This relationship enables precise false alarm control: setting $h = \log \gamma$ guarantees $\mathbb{E}_\infty[\tau_h] \ge \gamma$, preventing premature detection during the Uncertainty Region where answers are unreliable. 
Importantly, our method depends on only one hyperparameter $h$, which directly controls the reliability-efficiency trade-off. Increasing $h$ reduces false alarms (more reliable) but delays detection, while decreasing $h$ triggers earlier detection at the risk of premature false alarm.
This simplicity prevents extensive hyperparameter search.

\begin{theorem}[Asymptotic Performance \citep{lorden1971procedures}]\label{thm:asymptotic}
As $h\to \infty$, the CUSUM stopping rule $\tau_h$ achieves:
\begin{equation}\label{eq:cusum-performance}
\mathcal{D}_L(\tau_h) = \frac{\log \gamma}{\mathcal{D}_{\mathrm{KL}}(f_1 \| f_0)} (1 + o(1)).
\end{equation}
\end{theorem}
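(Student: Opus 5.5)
The plan is Lorden's classical argument, taking as given the i.i.d.\ pre- and post-change model of Eq.~\eqref{eq:regime-shift}, Lemma~\ref{lemma:identifiability}, and Proposition~\ref{prop:false-alarm}. Throughout, $h$ is tied to $\gamma$ by $h=\log\gamma$, so Proposition~\ref{prop:false-alarm} makes $\tau_h$ feasible for Eq.~\eqref{eq:minimax}. The limit $h\to\infty$ is then the same as $\gamma\to\infty$. Write $I=\mathcal{D}_{\mathrm{KL}}(f_1\|f_0)$, and assume $\mathbb{E}_1[Z_1^2]<\infty$ as a mild regularity condition. The proof splits into a matching upper bound and lower bound.

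\textbf{Upper bound on the delay.} First reduce the worst case to $\nu=1$. Since $S_i\ge 0$ and the recursion $S_i=\max(0,S_{i-1}+Z_i)$ is monotone in its starting value, the value $S_{\nu-1}$ can only help the post-change statistic reach $h$. It follows that $\mathbb{E}_\nu[(\tau_h-\nu)^+\mid\mathcal{F}_{\nu-1}]\le \mathbb{E}_1[\tau_h]-1$ almost surely, for every $\nu$ and every history. This makes $\nu=1$ with $S_0=0$ the least favorable case, so $\mathcal{D}_L(\tau_h)=\mathbb{E}_1[\tau_h]-1$. Next, under $\mathbb{E}_1$ we have $S_i\ge\sum_{t=1}^i Z_t$, so $\tau_h\le \sigma_h:=\inf\{i:\sum_{t\le i}Z_t\ge h\}$. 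By Lemma~\ref{lemma:identifiability} the random walk $\sum_t Z_t$ has positive drift $I$. Wald's identity together with a bound on the overshoot (finite second moment, via Lorden's renewal bound) then gives $\mathbb{E}_1[\sigma_h]=h/I+O(1)$. Hence $\mathcal{D}_L(\tau_h)\le \frac{\log\gamma}{I}(1+o(1))$.

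\textbf{Lower bound: the universal minimax bound.} Show that every stopping time $\tau$ with $\mathbb{E}_\infty[\tau]\ge\gamma$ satisfies $\mathcal{D}_L(\tau)\ge \frac{\log\gamma}{I}(1+o(1))$. This is Lorden's theorem, and it is the step I expect to be the main obstacle. The route I would try is the following.
\begin{enumerate}
\item Fix $\epsilon>0$ and set $m=(1-\epsilon)\log\gamma/I$.
\item Use the constraint $\mathbb{E}_\infty[\tau]\ge\gamma$ and a pigeonhole argument over blocks of length $m$. This yields some $\nu$ with $P_\infty(\nu\le\tau<\nu+m\mid\tau\ge\nu)\lesssim m/\gamma$.
\item Change measure from $P_\infty$ to $P_\nu$ on $\mathcal{F}_{\nu+m}$ via the likelihood ratio $\exp(\sum_{t=\nu}^{\nu+m}Z_t)$.
\item Note that by the strong law under $P_\nu$ this sum is at most $(1-\epsilon/2)\log\gamma$ with probability tending to one. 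Combined with step 2, this gives $P_\nu(\tau<\nu+m\mid\tau\ge\nu)\to 0$.
\item Conclude that the conditional delay is at least $m(1-o(1))$.
\end{enumerate}
The essential supremum in $\mathcal{D}_L$ is what lets us pick the bad $\nu$ and conditioning event. The delicate parts are the conditioning on $\mathcal{F}_{\nu-1}$ and making the strong-law step uniform, which needs a maximal version such as $\max_{j\le m}\sum Z_t/m\to I$ in probability.

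\textbf{Combining.} Apply the lower bound to $\tau=\tau_h$, which is feasible for the constraint. This matches the upper bound and yields Eq.~\eqref{eq:cusum-performance}, and it also shows that CUSUM is asymptotically minimax-optimal.
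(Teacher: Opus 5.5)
Your proposal is correct and is essentially the argument the paper relies on: the paper gives no proof of this theorem and simply defers to Lorden (1971). Your sketch reconstructs that classical result, namely the $\nu=1$ worst-case reduction with the random-walk/Wald bound for CUSUM's delay, the pigeonhole-plus-change-of-measure argument (as in Lai, 1998) for the universal lower bound, and the coupling $h=\log\gamma$ made explicit through Proposition~\ref{prop:false-alarm}.

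Two small remarks. First, the maximal law of large numbers you flag is not needed: your own fixed-window change of measure on $\mathcal{F}_{\nu+m}$ already covers the event $\{\nu\le\tau<\nu+m\}$, so the weak law suffices. Second, the displayed equality by itself only needs a lower bound on CUSUM's own delay, which follows from $h\le S_{\tau_h}\le\sum_{t\le\tau_h}Z_t-\inf_{k\ge0}\sum_{t\le k}Z_t$ and Wald's identity; the universal bound is what additionally supports the optimality claim the paper makes after the theorem.
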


\begin{table*}[t]
    \centering
    \caption{Early Exit Results on AIME25, AIME24, and GPQA-Diamond across three models, showing accuracy (\%) and token count with reduction percentage.}
    \begin{tabular}{l|cl|cl|cl|cl}
    \toprule
       \multirow{2}{*}{\textbf{Methods}}  &\multicolumn{2}{c}{\textbf{AIME25}} & \multicolumn{2}{c}{\textbf{AIME24}} &\multicolumn{2}{c}{\textbf{GPQA}} & \multicolumn{2}{c}{\textbf{Average}}\\
       & Acc $\uparrow$ & Tokens $\downarrow$ & Acc $\uparrow$ & Tokens $\downarrow$ & Acc $\uparrow$ & Tokens $\downarrow$ & Acc $\uparrow$ & Tokens $\downarrow$\\
    \midrule
    \rowcolor{gray!25}
    \multicolumn{9}{l}{\textit{\textbf{DeepSeek-R1-Distill-Qwen-7B}}}\\
    Vanilla & 41.04 & 14556 & 55.63 & 13313 & 38.38 & 8637 & 45.02 & 12169\\
    DEER & 37.5 & 12148\textcolor{green!60!black}{\tiny $\downarrow$16.5\%} & 46.67 & \textbf{10756}\textcolor{green!60!black}{\tiny $\downarrow$19.2\%} & 35.73 & 8227\textcolor{green!60!black}{\tiny $\downarrow$4.8\%} & 39.97 & 10377\textcolor{green!60!black}{\tiny $\downarrow$14.7\%}\\
    Dynasor &  37.11 & 12822\textcolor{green!60!black}{\tiny $\downarrow$11.9\%} & 52.67 & 11215\textcolor{green!60!black}{\tiny $\downarrow$15.8\%} & 19.32 & 8342\textcolor{green!60!black}{\tiny $\downarrow$3.4\%} & 36.37 & 10793\textcolor{green!60!black}{\tiny $\downarrow$11.3\%}\\
    Ours & \textbf{39.17} & \textbf{11740}\textcolor{green!60!black}{\tiny $\downarrow$19.3\%} &  \textbf{53.75} & 10912\textcolor{green!60!black}{\tiny $\downarrow$18.0\%} & \textbf{40.40} & \textbf{8191}\textcolor{green!60!black}{\tiny $\downarrow$5.2\%} & \textbf{44.44} & \textbf{10281}\textcolor{green!60!black}{\tiny $\downarrow$15.5\%} \\ 
    \midrule
    \rowcolor{gray!25}
    \multicolumn{9}{l}{\textit{\textbf{Qwen3-4B-Thinking-2507}}}\\
    Vanilla & 81.04 & 22613 &77.29 & 19178 & 64.65 & 9442 &74.32 & 17078\\
    DEER & 76.2 &21056\textcolor{green!60!black}{\tiny $\downarrow$6.9\%} & 76.0 & 18925\textcolor{green!60!black}{\tiny $\downarrow$1.3\%} & 64.65 & 9082\textcolor{green!60!black}{\tiny $\downarrow$3.8\%} & 72.28 & 16354\textcolor{green!60!black}{\tiny $\downarrow$4.2\%}\\
    Dynasor & 74 & 20702\textcolor{green!60!black}{\tiny $\downarrow$8.4\%} & 75.56 & \textbf{18709}\textcolor{green!60!black}{\tiny $\downarrow$2.4\%} & 63.14 & \textbf{8968}\textcolor{green!60!black}{\tiny $\downarrow$5.0\%} & 70.9 & 16127\textcolor{green!60!black}{\tiny $\downarrow$5.6\%} \\
    Ours & \textbf{78.13} & \textbf{20663}\textcolor{green!60!black}{\tiny $\downarrow$8.6\%} & \textbf{76.88} & 18714\textcolor{green!60!black}{\tiny $\downarrow$2.4\%} & \textbf{64.9} & 8980\textcolor{green!60!black}{\tiny $\downarrow$4.9\%}  & \textbf{73.3} & \textbf{16119}\textcolor{green!60!black}{\tiny $\downarrow$5.6\%}\\
    \midrule
    \rowcolor{gray!25}
    \multicolumn{9}{l}{\textit{\textbf{Qwen3-14B}}}\\
    Vanilla & 71.67 & 16727 & 81.46 & 13872 & 67.93 & 5988 & 73.69 & 12196\\
    DEER & 68.89 & 15438\textcolor{green!60!black}{\tiny $\downarrow$7.7\%} & 70.42 & 11454\textcolor{green!60!black}{\tiny $\downarrow$17.4\%} & 62.0 & 5942\textcolor{green!60!black}{\tiny $\downarrow$0.8\%} & 67.1 & 10944\textcolor{green!60!black}{\tiny $\downarrow$10.3\%} \\
    Dynasor & 68.27 & \textbf{14305}\textcolor{green!60!black}{\tiny $\downarrow$14.5\%} & 76.22 & 12508\textcolor{green!60!black}{\tiny $\downarrow$9.8\%} & 61.99 & 5902\textcolor{green!60!black}{\tiny $\downarrow$1.4\%} & 68.83 & 10905\textcolor{green!60!black}{\tiny $\downarrow$10.6\%}\\
    Ours & \textbf{68.96} &  14653\textcolor{green!60!black}{\tiny $\downarrow$12.4\%} & \textbf{77.71} & \textbf{12141}\textcolor{green!60!black}{\tiny $\downarrow$12.5\%} & \textbf{67.68} &  \textbf{5720}\textcolor{green!60!black}{\tiny $\downarrow$4.5\%} & \textbf{71.45} & \textbf{10838}\textcolor{green!60!black}{\tiny $\downarrow$11.1\%} \\
    \bottomrule
    \end{tabular}
    
    \label{tab:early-stop}
\end{table*}

\begin{figure*}
    \centering
    \includegraphics[width=0.9\linewidth]{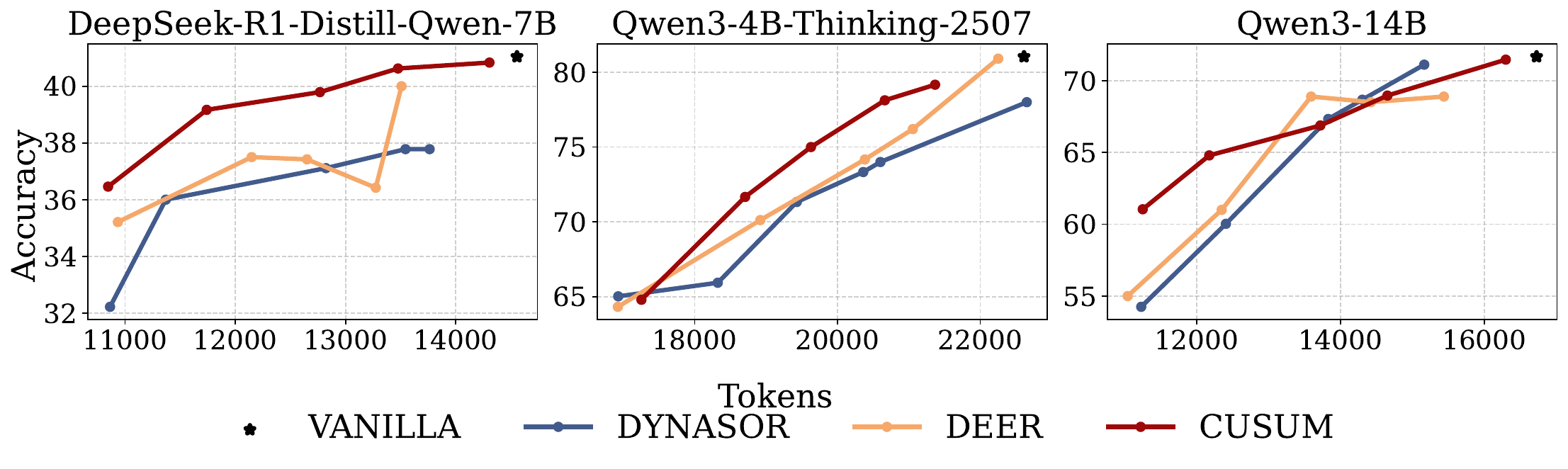}
    \caption{Pareto-frontier comparison of CUSUM, DEER, and Dynasor on AIME25: Our method achieves superior efficiency-accuracy trade-offs.}
    \label{fig:main-aime25}
\end{figure*}

This theorem suggests CUSUM approaches asymptotic optimality for confidence region detection. Among detection rules satisfying the false alarm constraint $\mathbb{E}_\infty[\tau] \ge \gamma$, CUSUM achieves near-minimal worst-case detection delay asymptotically. For CoT reasoning, this indicates our method can save close to the  theoretical maximum computations from early exit.                            

The detection delay is inversely proportional to the KL divergence between regions: problems with sharper uncertainty-to-confidence transitions (higher $\mathcal{D}_{\mathrm{KL}}(f_1 \| f_0)$) tend to enable faster detection and greater savings. The linear dependence on threshold $h$ provides effective control over the reliability-efficiency trade-off, allowing practitioners to calibrate detection sensitivity based on computational budget and accuracy requirements.

\paragraph{Robustness to Sequential Dependence.} The classical CUSUM framework assumes i.i.d.\ observations within each regime. In practice, the entropy sequence $\{\mathcal{H}_i\}$ exhibits temporal dependence across reasoning steps, as consecutive steps often share overlapping context. Fortunately, the consistency of CUSUM under dependent observations is well-established in the sequential analysis literature. Specifically, \citet{kokoszka1998change} show that CUSUM-type detectors remain consistent for identifying a mean shift under the mild condition that partial-sum variances grow sub-quadratically:
\begin{equation}
\mathrm{Var}\Bigl(\sum_{j=k}^{m} \mathcal{H}_j\Bigr) \le C(m - k + 1)^\delta, \quad 0 \le \delta < 2.
\end{equation}
In our setting, this condition is readily satisfied: the entropy drop at the regime shift is large relative to within-regime fluctuations (Table~\ref{tab:kl-divergence}), yielding a high signal-to-noise ratio in the log-likelihood ratios $Z_i$ (Eq.~\ref{eq:llr}). Consequently, the asymptotic guarantees established in Section~\ref{sec:properties} provide reliable approximations even in the presence of the sequential dependence inherent to CoT entropy sequences.

\subsection{Implementations in Early Exit and Test-Time Scaling}
To initialize the CUSUM detector, we first estimate the distributions $f_0$ and $f_1$ using Histogram Density Estimation \citep{freedman1981histogram} on entropy samples from 100 randomly selected trajectories in the Bespoke-Stratos-17k dataset. Specifically, $f_0$ is fitted to entropies in the Uncertainty Region, and $f_1$ to those in the Confidence Region. This calibration is \textbf{training-free}, requiring no model fine-tuning.
Based on the CUSUM statistic $S_i$, we propose two algorithms for efficient inference:

\paragraph{Early Exit.}
As outlined in Algorithm \ref{alg:early_exit} and Figure \ref{fig:method_case}, we run CUSUM online along the CoT trajectory. The generation terminates as soon as $S_i \ge h$, signaling that the model has entered the Confidence Region with sufficient statistical evidence. This stopping rule $\tau_h$ leverages the minimax optimality of CUSUM to minimize token waste while ensuring reliability.

\paragraph{Test-Time Scaling.}
Algorithm \ref{alg:tts} and Figure \ref{fig:method_case} detail our CUSUM-Weighted voting strategy. Instead of treating all sampled trajectories equally as in self-consistency \citep{wang2023selfconsistency}, we score each trajectory $\text{Traj}^{(n)}$ by its final CUSUM statistic $S_{\text{final}}^{(n)}$. This score quantifies the cumulative evidence that the trajectory successfully converged to the Confidence Region. By weighting answers by $S_{\text{final}}^{(n)}$, we prioritize high-quality, converged reasoning trajectories over those that remained in the Uncertainty Region.

\section{Experiments}

\subsection{Experimental Setup} 

We evaluate three representative open-source LLMs of varying sizes: DeepSeek-R1-Distill-Qwen-7B, Qwen3-4B-Thinking-2507, and Qwen3-14B. We test these models on challenging reasoning benchmarks: AIME24 and AIME25 (competition-level mathematics) and GPQA-Diamond \citep{rein2024gpqa} (graduate-level science questions). All models use their standard prompting strategies, with full details in Appendix Figure~\ref{fig:prompt}.
To ensure robust performance estimates, we vary the number of evaluation runs based on dataset size. For GPQA-Diamond, we average results over 4 random runs. For the smaller AIME datasets, we perform 16 random runs per dataset to mitigate sampling variance. Hyperparameter settings are detailed in Appendix~\ref{sec:hyper-param}.


\subsection{Early Exit}

\begin{figure*}[t]
  \centering
  \begin{subfigure}[b]{0.32\textwidth}
    \centering
    \includegraphics[width=\linewidth]{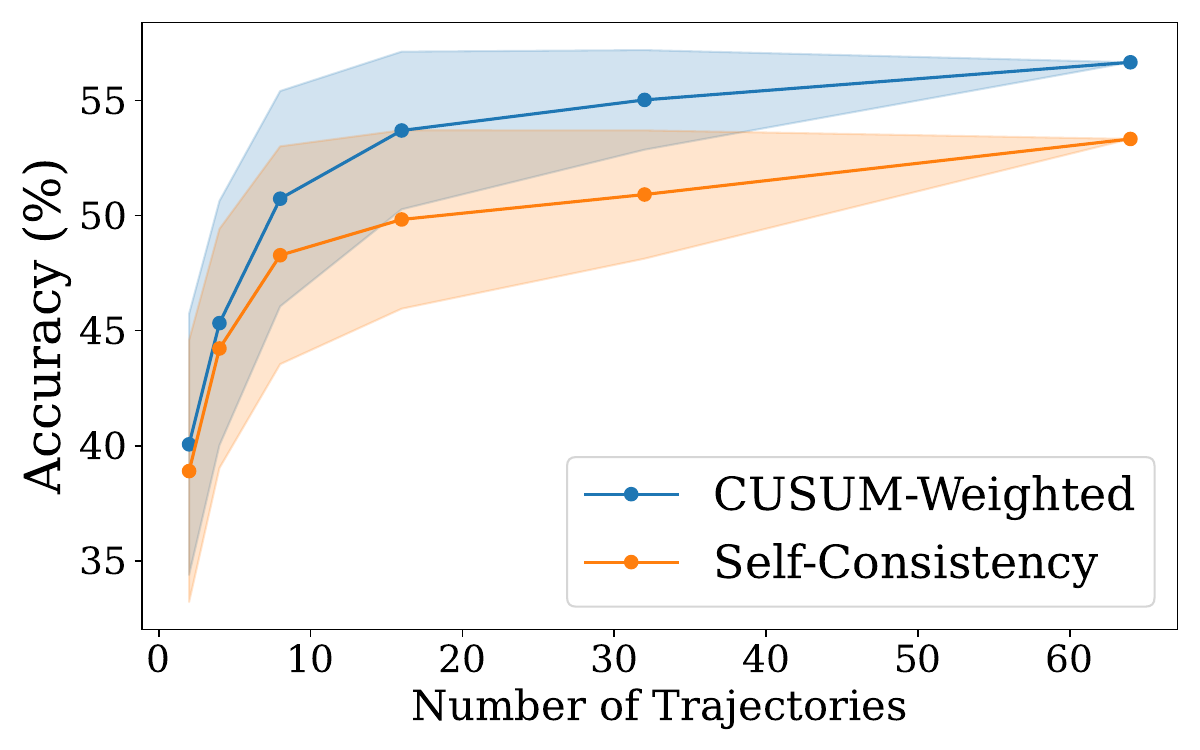}
    \caption{DeepSeek-R1-Distill-Qwen-7B}
  \end{subfigure}\hfill
  \begin{subfigure}[b]{0.32\textwidth}
    \centering
    \includegraphics[width=\linewidth]{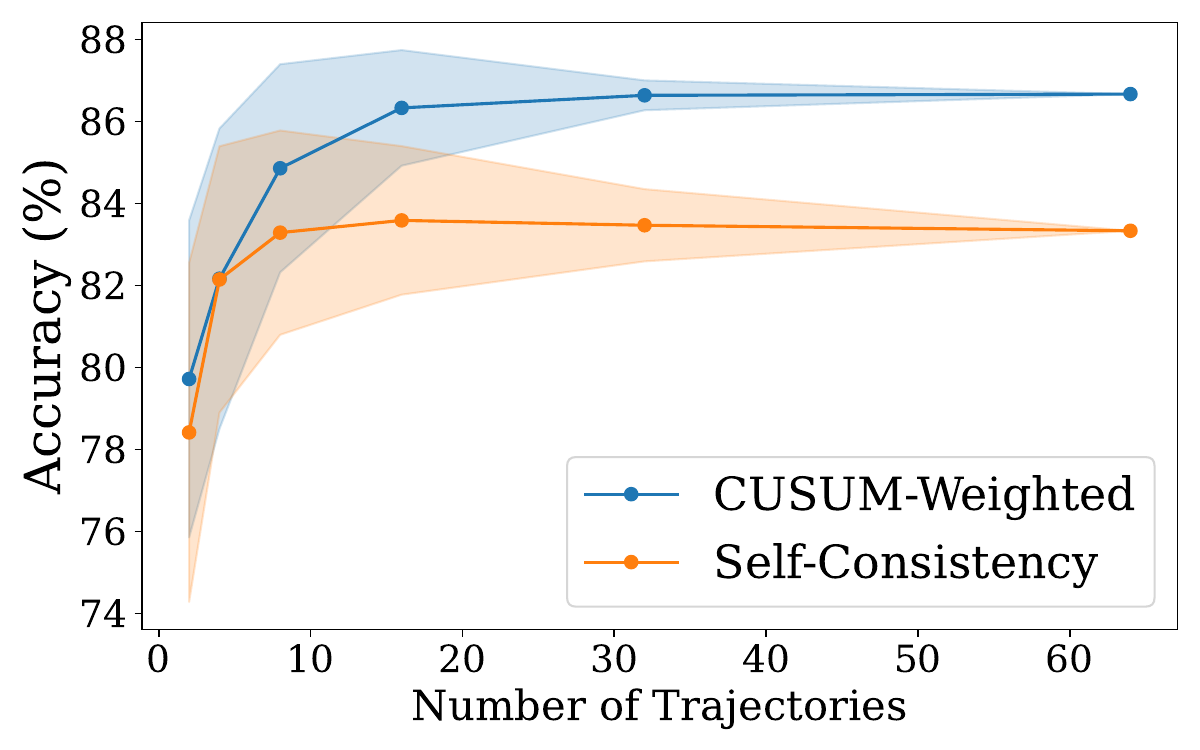}
    \caption{Qwen3-4B-Thinking-2507}
  \end{subfigure}\hfill
  \begin{subfigure}[b]{0.32\textwidth}
    \centering
    \includegraphics[width=\linewidth]{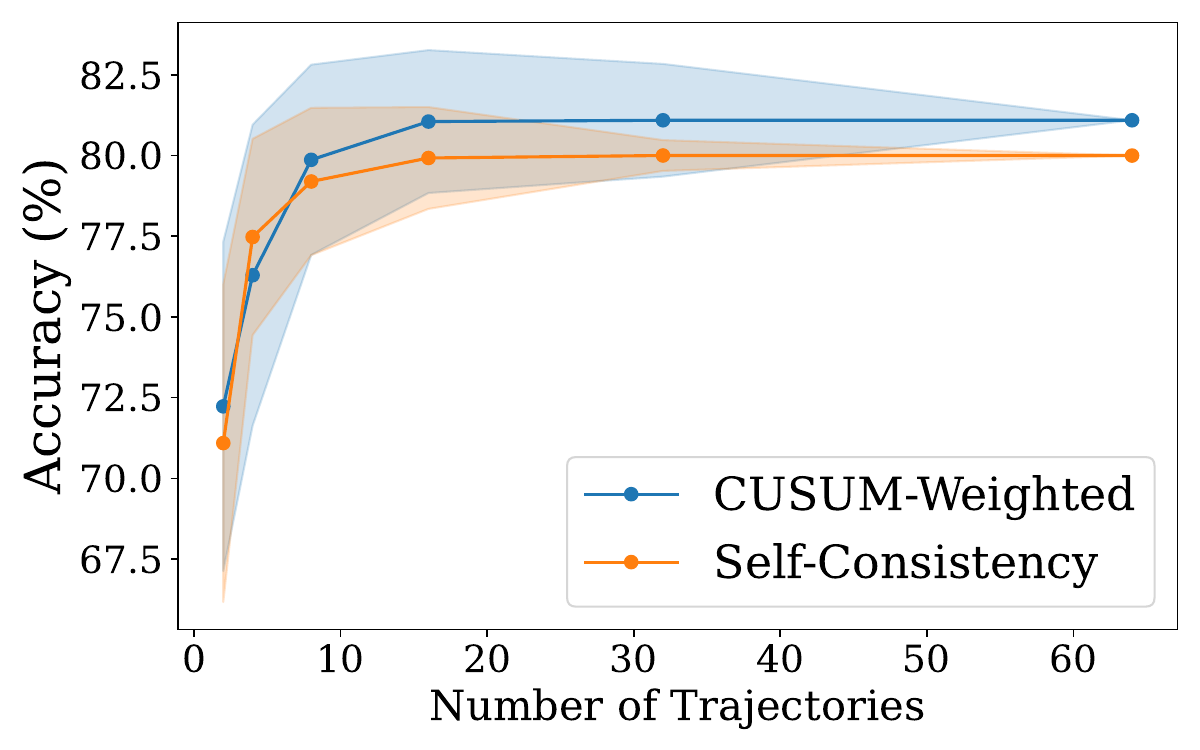}
    \caption{Qwen3-14B}
  \end{subfigure}
  \caption{Test-time scaling results on AIME25 dataset comparing self-consistency across different numbers of sampled reasoning trajectories $(N=2, 4, 8, 16, 32, 64)$ for three models. CUSUM Weighted voting consistently outperforms self-consistency across three models, with the performance gap widening as the number of sampled trajectories increases. This highlights the effectiveness of leveraging CUSUM-based confidence metrics to prioritize high-quality reasoning trajectories.}
  \label{fig:test-time-scaling}
\end{figure*}

\begin{figure}
    \centering
    \includegraphics[width=0.95\linewidth]{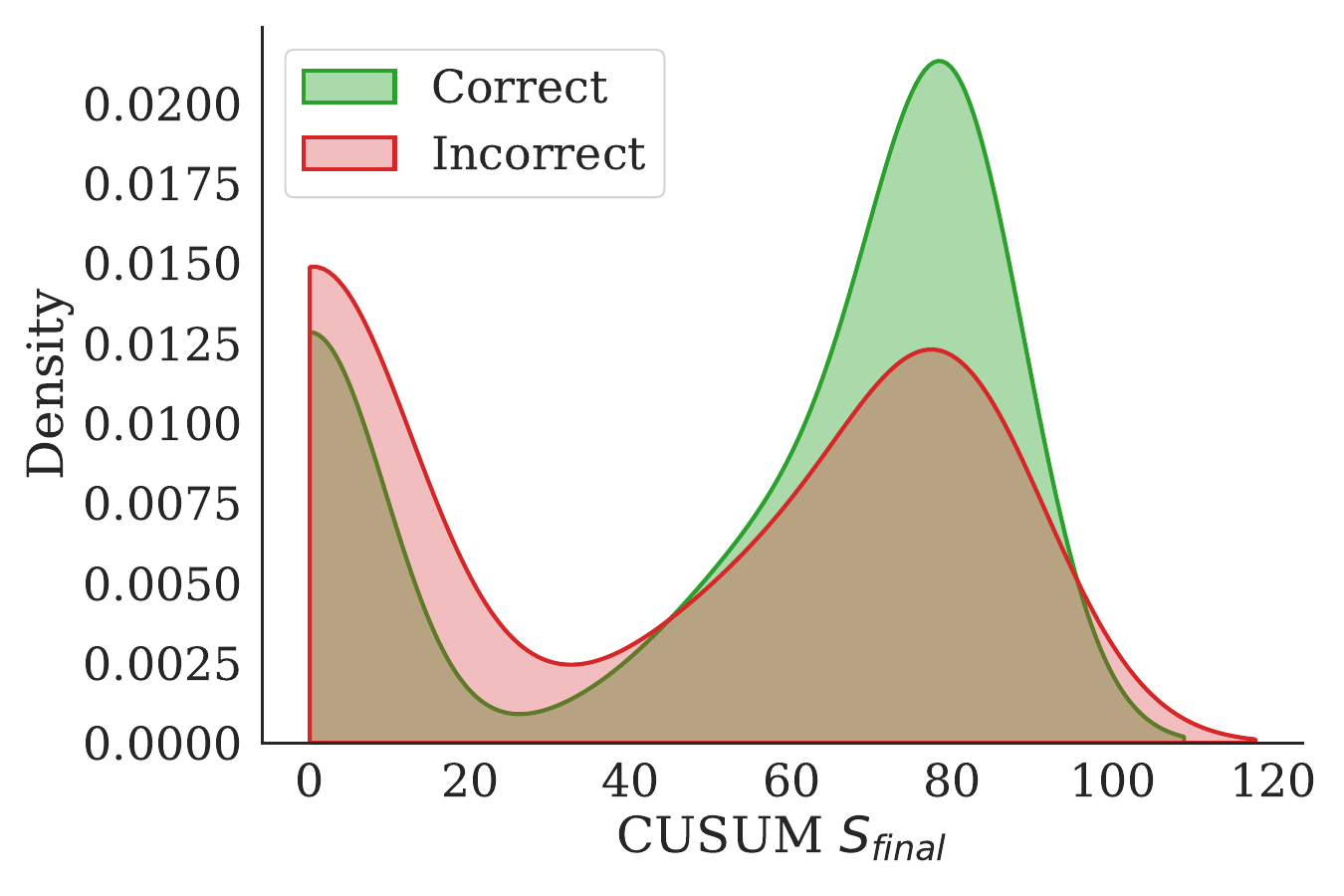}
    \caption{The distribution of CUSUM $S_{\text{final}}$ score from correct and incorrect samples.}
    \label{fig:voilin-cusum-st}
\end{figure}

\paragraph{Setup.}
We compare {\method} with three baselines: (1) Vanilla, which applies no early exit strategy and allows CoT reasoning to complete naturally; (2) DEER \citep{yang2025dynamic}, which employs a single-step high-confidence exit strategy; and (3) Dynasor \citep{fu2024efficiently}, which terminates generation upon detecting repetitive answer patterns.  

\paragraph{Results.}
As shown in Table \ref{tab:early-stop}, \method{} consistently outperforms both DEER and Dynasor across all three models and three datasets. On average, \method{} achieves 63.06\% accuracy with 12,413 tokens (11.1\% reduction), outperforming DEER (59.78\%) and Dynasor (58.70\%) by 3.28 and 4.36 percentage points, respectively. The advantages are particularly pronounced in challenging scenarios: on GPQA with DeepSeek-R1-Distill-Qwen-7B, \method{} achieves 40.40\% accuracy, significantly outperforming DEER (35.73\%) and Dynasor (19.32\%).

While Table \ref{tab:early-stop} reports performance at fixed configurations, a comprehensive evaluation requires examining the entire efficiency-accuracy trade-off space. To this end, we further compare the Pareto frontiers of different methods in AIME25. We vary the number of generated tokens by adjusting each method's key hyperparameter: the confidence threshold for DEER, the stable-answer threshold for Dynasor, and the threshold $h$ for \method. The resulting Pareto frontiers, visualized in Figure \ref{fig:main-aime25}, show that \method{} achieves a superior Pareto frontier compared to the other methods. This consistent superiority across three diverse models (4B to 14B parameters)  demonstrates the robustness and generalizability of \method.

We hypothesize that this improvement stems from {\method}'s principled detection of the uncertainty-to-confidence transition. While conventional heuristics often stop prematurely during uncertainty—sacrificing accuracy—or persist too long into redundancy, {\method} adaptively identifies the confidence region. This allows for more aggressive pruning without compromising  reliability.

\paragraph{Computational Overhead.}
To address the trade-off between probing latency and token reduction, we evaluate the total inference time of {\method}, DEER, and Dynasor against a vanilla baseline on AIME25 using Qwen3-4B-Thinking-2507.
As shown in Table \ref{tab:cost}, the overhead introduced by probing is remarkably minimal. These results demonstrate that the efficiency gains from reduced token usage significantly outweigh the marginal increase in latency, ensuring a net improvement in overall runtime.

\begin{table}[]
    \centering
    \caption{Computational Overhead of Different Early Exit Method.}
    \begin{tabular}{cccc}
    
    \toprule
    Method & Accuracy & Number of Tokens & Latency $(s)$\\
    \midrule
    Vanilla     &  41.04 & 14556 & 504\\
    DEER        & 37.5 & 12148 & 460\\
    Dynasor     & 37.11 & 12822 & 492 \\
    Ours        & 39.17 & 11740 & 419\\
    \bottomrule
    \end{tabular}

    \label{tab:cost}
\end{table}

\subsection{Test-Time Scaling}

\paragraph{Setup.}
For test-time scaling experiments, we generate $M=64$ reasoning trajectories per problem using temperature sampling ($\text{temp}=0.8$). We follow \citet{lightman2023letsverifystepstep} to obtain the mean and variance for the Accuracy curve. Given a total of $M$ number of trajectories, for $N < M$, we calculate the mean and variance across all the samples of size $N$ from the $M$ trajectories. 
We compare against the \textbf{Self-Consistency} \citep{wang2023selfconsistency} baseline, which uses standard majority voting where each trajectory receives equal weight. In contrast, our \textbf{CUSUM-Weighted} approach leverages the final CUSUM statistic $S_{\text{final}}$ as a quality indicator for each reasoning trajectory. We compute the weight score as in Algorithm \ref{alg:tts}.

\paragraph{Results.}
Figure \ref{fig:test-time-scaling} presents the results of our test-time scaling experiments across varying numbers of sampled reasoning trajectories ($N=2, 4, 8, 16, 32, 64$) for all three models on AIME25. CUSUM-weighted voting consistently outperforms self-consistency across all models, demonstrating the robustness of our method.

The performance gap widens as the number of sampled trajectories ($N$) increases. Specifically, for Qwen3-4B-Thinking-2507, the improvement over self-consistency grows from a slim margin at $N=2$ to a 3.33\% lead at $N=64$. This scaling behavior aligns with our insight: as more samples are available, the ability to prioritize trajectories that reached the Confidence Region over those stalled in the Uncertainty Region becomes increasingly critical.

\paragraph{Validating $S_{\text{final}}$ as a Quality Indicator.}
To verify that the final CUSUM statistic $S_{\text{final}}$ effectively captures trajectory quality, we analyze its distribution across correct and incorrect reasoning trajectories. 
Figure~\ref{fig:voilin-cusum-st} shows the distributions for Qwen3-4B-Thinking-2507 on AIME25: correct trajectories exhibit higher $S_{\text{final}}$ values than incorrect ones. This clear separation demonstrates that $S_{\text{final}}$ serves as a reliable quality indicator---trajectories with strong evidence of entering the Confidence Region (high $S_{\text{final}}$) are more likely to be correct than those remaining in the Uncertainty Region (low $S_{\text{final}}$). 
Both correct and incorrect trajectories also exhibit a small peak near $S_{\text{final}}=0$, which we attribute to persistent exploration without clear evidence accumulation for incorrect cases, and early truncation by the token limit for some correct cases.

These results underscore the value of entropy dynamics in test-time scaling. While standard Self-Consistency weighs all trajectories equally, it often dilutes the correct signal with noise from "hallucinated" or unconverged trajectories. Our CUSUM score acts as a convergence filter, prioritizing trajectories that reach the Confidence Region. As sampling diversity increases, this ability to distinguish converged reasoning from exploration contributes largely to the performance gains, enabling more effective use of compute by focusing on higher-quality reasoning trajectories.

\section{Related Work}
\subsection{Understanding Chain-of-Thought Reasoning}
CoT reasoning has unlocked remarkable capabilities in Large Language Models \citep{xu-etal-2024-two, deepseekmath, team2025kimik2, CoT, xu-etal-2025-seqpo, zheng2025moe, Hong2025BenchmarkingTT}. Recent work approaches understanding CoT through multiple lenses: Theoretically, \citet{liu2024chainofthought} prove that $T$ steps of CoT enable constant-depth transformers to simulate boolean circuits of size $T$, overcoming parallel processing constraints. Mechanistically, \citet{CoTunderstanding} identify a "functional rift" where representations transition from pre-training priors to in-context knowledge, while \citet{li2025cotvectors} introduce "CoT Vectors" revealing structured three-stage reasoning organization. Semantically, \citet{yu2024explainable} model reasoning as Markov chains to identify functional roles of individual steps. Most recently, \citet{bachmann2026potential} estimate the potential of a CoT trajectory---the probability of eventually reaching the correct answer---via repeated sampling, providing an offline, post-hoc characterization of reasoning progress.

While these studies provide valuable insights into local properties and individual components, they either treat CoT as discrete segments or require computationally intensive offline analysis. Our work complements them by analyzing \emph{online} global temporal dynamics---how reasoning evolves from exploration to convergence across the entire trajectory---using a single-pass entropy signal that enables real-time monitoring.

\subsection{Early Exit for Chain-of-Thought}

Training-free early-exit methods rely on heuristic signals to terminate CoT. Confidence-based approaches halt when answer confidence exceeds a threshold \citep{yang2025dynamic}, while stability-based methods stop once consecutive answers agree \citep{fu2024efficiently}. Others suppress exploration tokens when confident \citep{huang2025efficient} or encourage brevity via prompts \citep{draft2025}. More recently, EAT \citep{eat2025} detects convergence of next-token entropy near \texttt{</think>} via an EMA-variance heuristic, and HALT-CoT \citep{haltcot} applies a fixed entropy threshold at each step.

These methods improve efficiency but lack principled statistical foundations: per-step thresholding is vulnerable to transient low-entropy fluctuations in the Uncertainty Region (Section~\ref{sec:dynamics}), and heuristic triggers offer no formal guarantees. Our work formulates early exit as sequential change-point detection over a predictive entropy signal that averages over intermediate answers, providing provable false-alarm control and minimax-optimal detection delay.

\subsection{Test-Time Scaling for Chain-of-Thought}

Test-time scaling leverages additional inference compute to improve reasoning performance \citep{snell2024scaling}. Self-consistency \citep{wang2023selfconsistency} aggregates multiple sampled trajectories via majority voting, while \citet{brown2024llm} systematically compare strategies and find no single approach universally dominates. Concurrent works weight votes by final-answer confidence \citep{confidence2025sc} or token-level confidence \citep{deepthink2025}.

Our approach instead derives trajectory quality from the entire convergence process: the CUSUM statistic captures \emph{when} and \emph{how decisively} a trajectory commits, providing a richer signal than point estimates of confidence for distinguishing converged from unconverged reasoning paths.

\section{Conclusions}

We present a systematic investigation of entropy dynamics in Chain-of-Thought reasoning, revealing a consistent two-phase structure: a sharp transition from an \textit{Uncertainty Region} of stochastic exploration to a \textit{Confidence Region} of deterministic convergence. The Confidence Region exhibits two critical properties—\textit{High Reliability} and \textit{High Redundancy}—that unlock efficient inference.
Building on these insights, we are, to the best of our knowledge, the first to frame the monitoring of CoT reasoning as a classical sequential change-point detection problem.
We adopt the CUSUM algorithm, yielding a training-free framework with minimax optimality guarantees. Our comprehensive evaluation across diverse models and benchmarks demonstrates practical value: in early-exit scenarios, we achieve superior Pareto frontiers with over 11\% token reduction without compromising accuracy; in test-time scaling, CUSUM-weighted voting consistently outperforms self-consistency, with gains amplifying as sampling budget increases.
This work provides an information-theoretic perspective on LLM reasoning dynamics, complementing existing analyses by focusing on global properties rather than local token-level features. By quantifying uncertainty reduction through entropy, we establish a principled approach for detecting reasoning convergence and identifying high-quality reasoning trajectories. These contributions advance practical methods for more efficient and reliable inference. 

\newpage

\section*{Acknowledgement}
The authors thank Bryan Wilie for his thoughtful comments and suggestions during the experiments and writing of this paper, which were helpful in improving its overall quality.

\section*{Impact Statement}

This research aims to enhance the computational efficiency and understanding of LLM reasoning. All experiments utilized publicly available, non-sensitive benchmark datasets (AIME24, AIME25, GPQA) and pre-trained models within their respective licenses.

We acknowledge the potential for dual-use concerns, as increased efficiency could inadvertently lower the barrier for malicious LLM applications, such as large-scale misinformation generation.

However, the primary impact of our work is to significantly reduce computational overhead, thereby promoting more environmentally sustainable AI and democratizing access to advanced LLM reasoning for researchers with limited resources. We are committed to responsible AI development and believe our contributions offer clear positive value to the community.

\bibliography{example_paper}
\bibliographystyle{icml2026}

\newpage
\appendix
\onecolumn


\newpage

\section{Theoretical Proofs and Analysis}\label{sec:theory}

This appendix provides complete proofs and additional theoretical analysis for the results presented in the main text. We follow the notation established in Section~\ref{sec:method}.

\subsection{Notation Summary}

We observe a sequence of entropy values $\mathcal{H}_1, \mathcal{H}_2, \ldots$ sequentially during CoT generation, where $\mathcal{H}_i\in [0, H_{\max}]$.There exists an unknown change-point $\nu \in \{1,2,\ldots\} \cup \{\infty\}$ such that:
\begin{equation}
\mathcal{H}_i \sim 
\begin{cases}
f_0, & i < \nu \quad \text{(Uncertainty Region)}, \\
f_1, & i \ge \nu \quad \text{(Confidence Region)}.
\end{cases}
\end{equation}
If $\nu = \infty$, no transition ever occurs and $\mathcal{H}_i \sim f_0$ for all $i$.

We use the following probability measures:
\begin{itemize}
    \item $\mathbb{P}_\nu$: the measure under which change occurs at time $\nu$
    \item $\mathbb{P}_\infty$: the no-change measure, where all $\mathcal{H}_i \sim f_0$
    \item $\mathbb{P}_1$: the immediate-change measure where  $\nu=1$, so all $\mathcal{H}_i \sim f_1$
\end{itemize}

The log-likelihood ratio is $Z_i = \log \frac{f_1(\mathcal{H}_i)}{f_0(\mathcal{H}_i)}$, and the natural filtration $\mathcal{F}_i = \sigma(\mathcal{H}_1,\ldots,\mathcal{H}_i)$    
  represents the $\sigma$-algebra of events measurable with respect to observations up to time $i$.

\subsection{Proof of Identifiability (Lemma~\ref{lemma:identifiability})}\label{sec:proof-identifiability}

We prove that the Confidence Region is statistically detectable when $f_0$ and $f_1$ are separable (Assumption \ref{ass:identifiability}).

\begin{proof}
Let $f_0$ and $f_1$ denote the entropy distributions in the Uncertainty and Confidence Regions respectively. The log-likelihood ratio at step $t$ is defined as:
\begin{equation}
Z_i = \log \frac{f_1(\mathcal{H}_i)}{f_0(\mathcal{H}_i)}
\end{equation}

\paragraph{Expected Pre-change drift ($\mathcal{H}_i\sim f_0$).}
When sampling from $f_0$, the expected log-likelihood ratio is:
\begin{align}
\mathbb{E}_{f_0}[Z_i] &= \int_{0}^{H_{\max}} \log \frac{f_1(h)}{f_0(h)} \cdot f_0(h) \, dh \\
&= \int_{0}^{H_{\max}} f_0(h) \log f_1(h) \, dh - \int_{0}^{H_{\max}} f_0(h) \log f_0(h) \, dh \\
&= -H(f_0, f_1) + H(f_0) \\
&= -D_{KL}(f_0 \| f_1) < 0
\end{align}
where $H(f_0, f_1)$ is the cross-entropy and $H(f_0)$ is the entropy of $f_0$. The inequality follows from the assumption $D_{KL}(f_0 \| f_1) > 0$ in Assumption \ref{ass:identifiability}.

\paragraph{Expected Post-change drift ($\mathcal{H}_i \sim f_1$).}
When sampling from $f_1$, the expected log-likelihood ratio is:
\begin{align}
\mathbb{E}_{f_1}[Z_i] &= \int_{0}^{H_{\max}} \log \frac{f_1(h)}{f_0(h)} \cdot f_1(h) \, dh \\
&= \int_{0}^{H_{\max}} f_1(h) \log f_1(h) \, dh - \int_{0}^{H_{\max}} f_1(h) \log f_0(h) \, dh \\
&= -H(f_1) + H(f_1, f_0) \\
&= D_{KL}(f_1 \| f_0) > 0
\end{align}

\paragraph{Detectability guarantee.}
The sign reversal in expected drift---negative before the change-point and positive after---is the fundamental requirement for sequential detection. Under these conditions:
\begin{itemize}
    \item Before change ($i < \nu$): Negative drift keeps $S_i$ near zero, preventing false alarms.
    \item After change ($i \geq \nu$): Positive drift causes $S_i$ to accumulate linearly, eventually crossing threshold $h$.
\end{itemize}

This completes the proof that the regime shift is statistically detectable with finite expected delay.
\end{proof}

\subsection{Statistical Properties of CoT Entropy Sequences}\label{sec:entropy-statistics}

We provide additional statistical characterization of CoT entropy sequences that supports our theoretical framework.
\begin{table}
    \centering
    \caption{KL-divergence between the entropy distributions of the uncertainty $f_0$ and confidence $f_1$ regions.}
    \begin{tabular}{lll}
        \toprule
        \textbf{Model} & \textbf{$D_{KL}(f_1 \| f_0)$} & \textbf{$D_{KL}(f_0 \| f_1)$} \\
        \midrule  
        DeepSeek-R1-Distill-Qwen-7B & 0.87 & 1.42 \\
        Qwen3-4B-Thinking-2507 & 1.31 & 2.78 \\
        Qwen3-14B & 1.02 & 2.03 \\
        \bottomrule  
    \end{tabular}   
    \label{tab:kl-divergence}
\end{table}

\paragraph{Bounded Support.}
The predictive entropy $\mathcal{H}_i$ is naturally bounded: $\mathcal{H}_i \in [0, H_{\max}]$ where $H_{\max} = \log |V|$ depends on the vocabulary size $|V|$. In practice, for typical LLM vocabularies with $|V| \approx 150,000$, we have $H_{\max} \approx 11.9$ nats.


\paragraph{Distribution Separation.}
The KL-divergence between $f_0$ (Uncertainty Region) and $f_1$ (Confidence Region) quantifies the ``signal strength'' for detection. Larger divergence enables faster detection with lower error rates. Across our experiments:
\begin{itemize}
    \item DeepSeek-R1-Distill-Qwen-7B: $D_{KL}(f_1 \| f_0) \approx 0.87$ nats
    \item Qwen3-4B-Thinking-2507: $D_{KL}(f_1 \| f_0) \approx 1.31$ nats
    \item Qwen3-14B: $D_{KL}(f_1 \| f_0) \approx 1.02$ nats
\end{itemize}



\section{Experiments}\label{app:experiments}

\subsection{Hyperparameter Tuning}\label{sec:hyper-param}
\paragraph{Probing Interval $k$.}
During CoT generation, we define a probing interval $k$ as the number of tokens generated in each decoding step $T_i$.
We empirically evaluate intervals of 64, 128, and 256 tokens in early-exit setting. As shown in Table \ref{tab:probing-interval}, an interval of 256 tokens leads to degraded accuracy, while an interval of 128 achieves comparable accuracy to 64 but requires fewer probing operations. Therefore, we select $k=128$ as the optimal trade-off between efficiency and performance, balancing computational overhead with decoding reliability.

\begin{table}[H]
    \centering
    \caption{Hyper-parameter tuning on the Probing Interval.}
    \begin{tabular}{cccc}
    \toprule
    Probing interval & 64 & 128 & 256 \\
    \midrule 
    Accuracy  &  50.19  & 50.20  & 49.38  \\
    \#Tokens & 8,869 & 8,829 & 8,674\\
    \bottomrule
    \end{tabular}

    \label{tab:probing-interval}
\end{table}

\paragraph{Detection Threshold $h$.}
Our early exit method mainly relies on the detection threshold $h$. We determine the optimal $h$ for each model using 100 randomly sampled instances from Bespoke-Stratos-17k as a validation set, prioritizing maximum token reduction while maintaining baseline accuracy.

\begin{table}[]
    \centering
    \caption{Hyper-parameter setting on $h$.}
    \begin{tabular}{lcc}
    \toprule
     Model    & $h$ \\
     \midrule
     DeepSeek-R1-Distill-Qwen-7B    & 35\\
     Qwen3-4B-Thinking-2507 & 125\\
     Qwen3-14B& 65\\
     \bottomrule
    \end{tabular}

    \label{tab:hyper-param}
\end{table}

Table \ref{tab:hyper-param} presents the selected thresholds. These values vary by model because the Kullback–Leibler (KL) divergence between the uncertainty and confidence distributions, $\mathcal{D}_{\mathrm{KL}}(f_0 \| f_1)$, is model-specific (see Table \ref{tab:kl-divergence}).

The selected thresholds are higher than the theoretical value. According to Theorem~\ref{thm:asymptotic}, they correspond to theoretical delays of $\frac{h}{\mathcal{D}_{\mathrm{KL}}(f_1 \| f_0)} \approx 40\text{-}95$ steps. This conservative choice is intentional: while CUSUM can detect the statistical transition into the Confidence Region rapidly, answer accuracy requires additional steps to fully stabilize. As shown in Figure~\ref{fig:main}, accuracy continues to improve for approximately 20 steps after the initial region transition. By setting higher $h$ values, we introduce a safety buffer ensuring termination occurs only after answers have converged to high reliability, rather than at the earliest statistical signal. 

Importantly, we apply the same threshold $h$ across all datasets (AIME24, AIME25, GPQA-Diamond) for each model, without dataset-specific tuning. Despite significant variations in task difficulty—GPQA-Diamond focuses on graduate-level science questions while AIME targets high-school mathematics competitions—the fixed thresholds maintain consistent performance. This demonstrates that our method exhibits robustness across different reasoning domains and difficulty levels.

\subsection{Analysis on Different Difficulty Levels}\label{sec:difficulty-analysis}
\begin{figure*}[htbp]
    \centering

    \includegraphics[width=\textwidth]{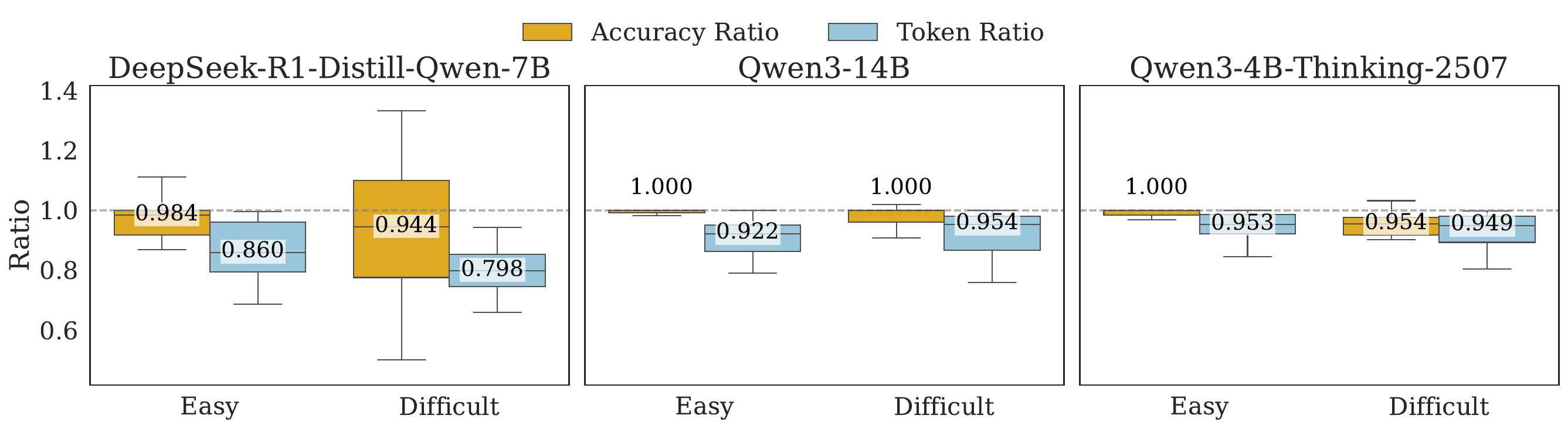}

    \caption{{\method} efficiency across difficulty levels. Results show that \method{} can reduce redundancy in easy problems without sacrificing accuracy across all models. For difficult instances, {\method} achieves a dramatic 20.2\% token saving on DeepSeek-R1-Distill-Qwen-7B, while the more optimized reasoning of Qwen3 models leads to more conservative exit behavior.}
    \label{fig:difficulty-analysis}
\end{figure*}

To analyze performance across problem difficulty, we partition AIME25 problems into "easy" and "difficult" subsets based on each model's vanilla accuracy. Figure~\ref{fig:difficulty-analysis} shows the accuracy ratio and token ratio relative to vanilla method.
For easy problems, {\method} achieves substantial token reduction with minimal accuracy loss across all models, demonstrating that models consistently generate redundant tokens even on problems they can solve reliably.
For difficult problems, {\method}'s impact varies by model capability. DeepSeek-R1-Distill-Qwen-7B achieves dramatic token savings (20.2\%) while maintaining comparable accuracy to vanilla. In contrast, the more powerful Qwen3 models show smaller token reductions, suggesting their vanilla reasoning on difficult problems already contains less redundancy and is more optimized.

\subsection{Trajectory Pattern Analysis}
To understand how \method{} handles different entropy dynamics, we classify each trajectory by the sign of its log-likelihood ratio sequence $\{Z_i\}$ into three categories:
\begin{itemize}[leftmargin=*, itemsep=2pt]
    \item \textit{Reliably transitioning} (36\%): once $Z_i > 0$ is first observed, all subsequent $Z_j > 0$ for $j > i$.
    \item \textit{Oscillating} (63\%): after the first $Z_i > 0$, at least one subsequent $Z_j < 0$, meaning the trajectory re-enters the Uncertainty Region before final convergence.
    \item \textit{Never converging} (1\%): $Z_i < 0$ throughout the entire generation.
\end{itemize}

This classification is deterministic and algorithm-agnostic. Table~\ref{tab:trajectory-pattern} reports the stratified comparison on AIME25 with DeepSeek-R1-Distill-Qwen-7B.

\begin{table}[t]
\centering
\caption{Stratified comparison by trajectory pattern on AIME25 (DeepSeek-R1-Distill-Qwen-7B).}
\label{tab:trajectory-pattern}
\small
\begin{tabular}{lcccccc}
\toprule
\multirow{2}{*}{\textbf{Method}} & \multicolumn{2}{c}{\textbf{Reliably Trans.\ (36\%)}} & \multicolumn{2}{c}{\textbf{Oscillating (63\%)}} & \multicolumn{2}{c}{\textbf{Never Conv.\ (1\%)}} \\
\cmidrule(lr){2-3} \cmidrule(lr){4-5} \cmidrule(lr){6-7}
 & Acc & Tokens & Acc & Tokens & Acc & Tokens \\
\midrule
Vanilla & 74.6 & 6518 & 55.6 & 11027 & 0 & 32678 \\
DEER & 71.34 & 6132 & 41.0 & 9120 & 0 & 32678 \\
Dynasor & 72.42 & 5878 & 46.93 & 9029 & 0 & 32678 \\
\textbf{\method{}} & \textbf{74.6} & \textbf{6239} & \textbf{54.1} & \textbf{9375} & 0 & 32678 \\
\bottomrule
\end{tabular}
\end{table}

For reliably transitioning trajectories, all methods reduce tokens since the entropy signal is unambiguous. However, \method{} is the only method that preserves full Vanilla accuracy (74.6\%) while achieving 4.3\% token reduction, as it waits for statistically confirmed convergence rather than exiting at the first low-entropy observation.

For the oscillating subset, which constitutes the majority (63\%) of trajectories, the performance gap becomes substantial. Threshold-based methods (DEER, EAT, HALT-CoT) exit prematurely at transient low-entropy dips, losing 9--14 percentage points in accuracy. Dynasor is more robust due to its stable-answer criterion, but still loses 8.7 points. In contrast, \method{} loses only 1.5 points because $S_i$ resets toward zero during high-entropy phases and accumulates only under sustained low-entropy evidence. This demonstrates that cumulative evidence accumulation is essential for distinguishing genuine convergence from transient fluctuations.

For the rare never-converging cases (1\%), no method triggers an exit, and all degrade to Vanilla generation with 0\% accuracy. This confirms that \method{} degrades safely when no convergence occurs.

\subsection{Analysis: Correct vs. Incorrect Convergence}
\label{sec:convergence-analysis}

To understand CUSUM's behavior when the model enters the Confidence Region, we analyze trajectories based on their final answer correctness. We categorize trajectories into two types: \textit{correct convergence} (entering the Confidence Region with correct answers) and \textit{incorrect convergence} (entering the Confidence Region with wrong answers). Table~\ref{tab:convergence-analysis} compares vanilla generation and CUSUM early exit for DeepSeek-R1-Distill-Qwen-7B on AIME25.

\begin{table}[h]
    \centering
    \caption{Analysis of trajectories that enter the Confidence Region: accuracy preservation and token consumption for DeepSeek-R1-Distill-Qwen-7B on AIME25.}
    \begin{tabular}{lcc}
    \toprule
    \textbf{Method} & \textbf{Correct Convergence} & \textbf{Incorrect Convergence} \\
     & (Acc\% / Avg. Tokens) & (Acc\% / Avg. Tokens) \\
    \midrule
    Vanilla (Full Generation) & 100.0 / 7,516 & 0.0 / 19,210 \\
    CUSUM Early Exit & 95.71 / 6,541 & 0.0 / 15,285 \\
    \midrule
    \textit{Token Reduction} & \textit{13.0\%} & \textit{20.4\%} \\
    \bottomrule
    \end{tabular}
    \label{tab:convergence-analysis}
\end{table}

\paragraph{Correct Convergence.} For trajectories that converge to correct answers, CUSUM early exit preserves 95.71\% accuracy while reducing token consumption by 13.0\% (from 7,516 to 6,541 tokens). This demonstrates the core value proposition of our method: by detecting the transition to the Confidence Region, we can safely terminate generation once the model has converged to the correct answer, eliminating redundant computation without sacrificing correctness.

\paragraph{Incorrect Convergence.} For trajectories that converge to incorrect answers, CUSUM early exit maintains 0\% accuracy (as expected—the model has already settled on the wrong answer) while still reducing token consumption by 20.4\% (from 19,210 to 15,285 tokens). This shows that CUSUM successfully identifies convergence regardless of answer correctness, reducing latency even on failure cases without introducing additional errors.

\paragraph{Key Insight.} This analysis reveals a critical property: CUSUM detects the \textit{convergence event} itself, independent of answer correctness. Once the model enters the Confidence Region, continued generation yields minimal benefit—whether the answer is correct or incorrect, the reasoning has stabilized. By exploiting this high redundancy property (Section~\ref{sec:dynamics}), CUSUM achieves consistent efficiency.

\subsection{Generalization to Open-Ended Tasks.}
To evaluate whether \method{} generalizes across domains, we apply it to two Open-Ended benchmarks using Qwen3-4B-Thinking-2507: HotpotQA \citep{hotpotqa} (multi-hop Wikipedia QA with free-form entity spans) and LiveCodeBench \citep{livecodebench} (code execution output prediction with Python literals). Both benchmarks differ from the calibration set (Bespoke-Stratos-17k) in task type and answer format.

As shown in Table~\ref{tab:ood}, \method{} achieves a consistently superior accuracy-efficiency trade-off. On HotpotQA, \method{} reduces tokens by 12.3\% while incurring only a 0.85 EM drop relative to Vanilla; in contrast, DEER and Dynasor sacrifice 1.75 and 11.75 EM respectively. On LiveCodeBench, \method{} matches DEER in accuracy (86.6\%) while using 2.3\% fewer tokens.

\begin{table}[t]
\centering
\caption{Generalization to Open-Ended Tasks with Qwen3-4B-Thinking-2507. The same threshold $h$ calibrated on Bespoke-Stratos-17k is applied without recalibration.}
\label{tab:ood}
\small
\begin{tabular}{lcccc}
\toprule
\multirow{2}{*}{\textbf{Method}} & \multicolumn{2}{c}{\textbf{HotpotQA}} & \multicolumn{2}{c}{\textbf{LiveCodeBench}} \\
\cmidrule(lr){2-3} \cmidrule(lr){4-5}
 & EM $\uparrow$ & Tokens $\downarrow$ & ACC $\uparrow$ & Tokens $\downarrow$ \\
\midrule
Vanilla & 34.95 & 3511 & 90.92 & 2898 \\
DEER & 33.20 & 3392 & 86.60 & 2683 \\
Dynasor & 23.20 & 2378 & 85.90 & 2596 \\
\textbf{\method{}} & \textbf{34.10} & \textbf{3080} & \textbf{86.60} & \textbf{2621} \\
\bottomrule
\end{tabular}
\end{table}

\subsection{Performance on Short Reasoning Chains}

To assess \method{}'s effectiveness on simpler problems with short reasoning chains, we evaluate it on the AMC23 dataset, which contains elementary mathematics questions requiring shorter reasoning chains. Table~\ref{tab:amc23-results} shows results for DeepSeek-R1-Distill-Qwen-7B.

\begin{table}[h]
    \centering
    \caption{Performance on AMC23 dataset with short reasoning chains using DeepSeek-R1-Distill-Qwen-7B.}
    \begin{tabular}{lcc}
    \toprule
    Method & Accuracy (\%) & Avg. Tokens \\
    \midrule
    Vanilla     & 89.69 & 6,233 \\
    \method{}   & 89.36 & 5,629 \\
    \midrule
    \textit{Change} & \textit{-0.33} & \textit{-9.7\%} \\
    \bottomrule
    \end{tabular}
    \label{tab:amc23-results}
\end{table}

Even on short reasoning chains, \method{} achieves comparable accuracy (89.36\% vs. 89.69\%) while reducing token consumption by 9.7\%. This demonstrates that the Confidence Region detection mechanism generalizes beyond complex multi-step reasoning: models generate redundant tokens even on simpler problems, and CUSUM successfully identifies when reasoning has converged regardless of chain length.

\begin{figure}[h]
    \centering
    \includegraphics[width=\linewidth]{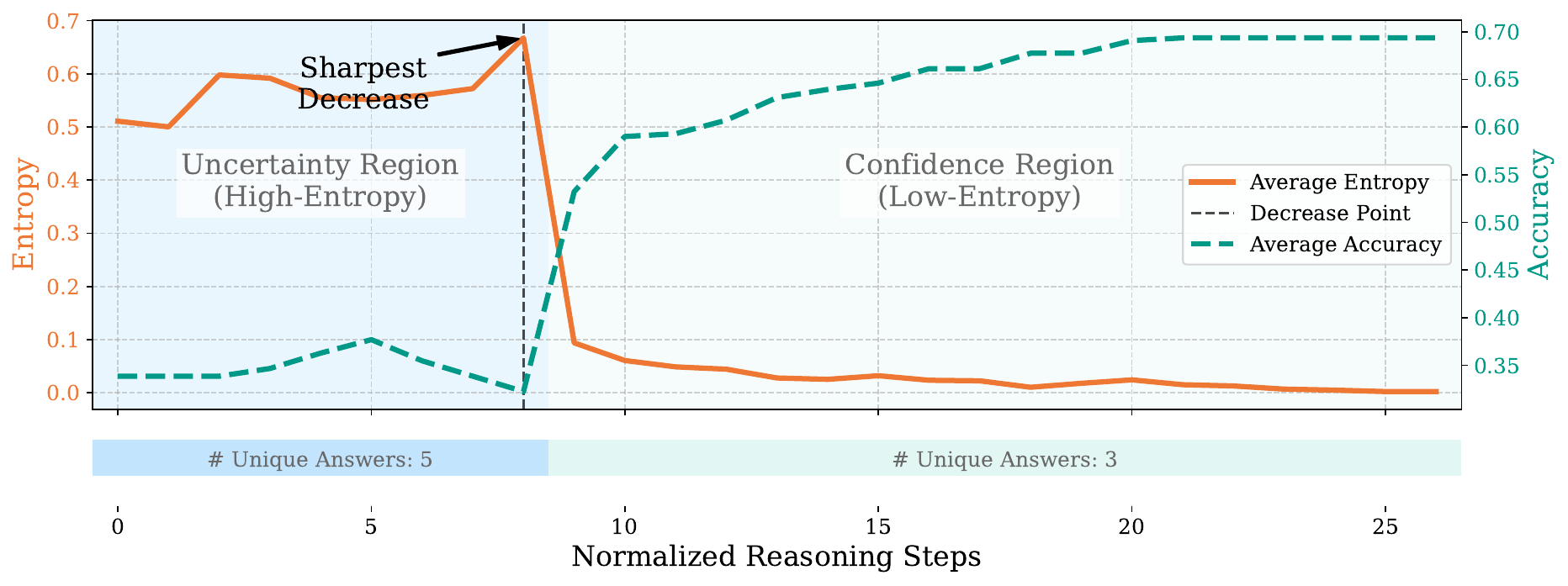}
    \caption{Entropy and Accuracy dynamics across reasoning steps for DeepSeek-R1-Distill-Qwen-7B.}
    \label{fig:entropy-acc-dynamics-DeepSeek-R1-Distill-Qwen-7B} 
\end{figure}

\begin{figure}[h]
    \centering
    \includegraphics[width=\linewidth]{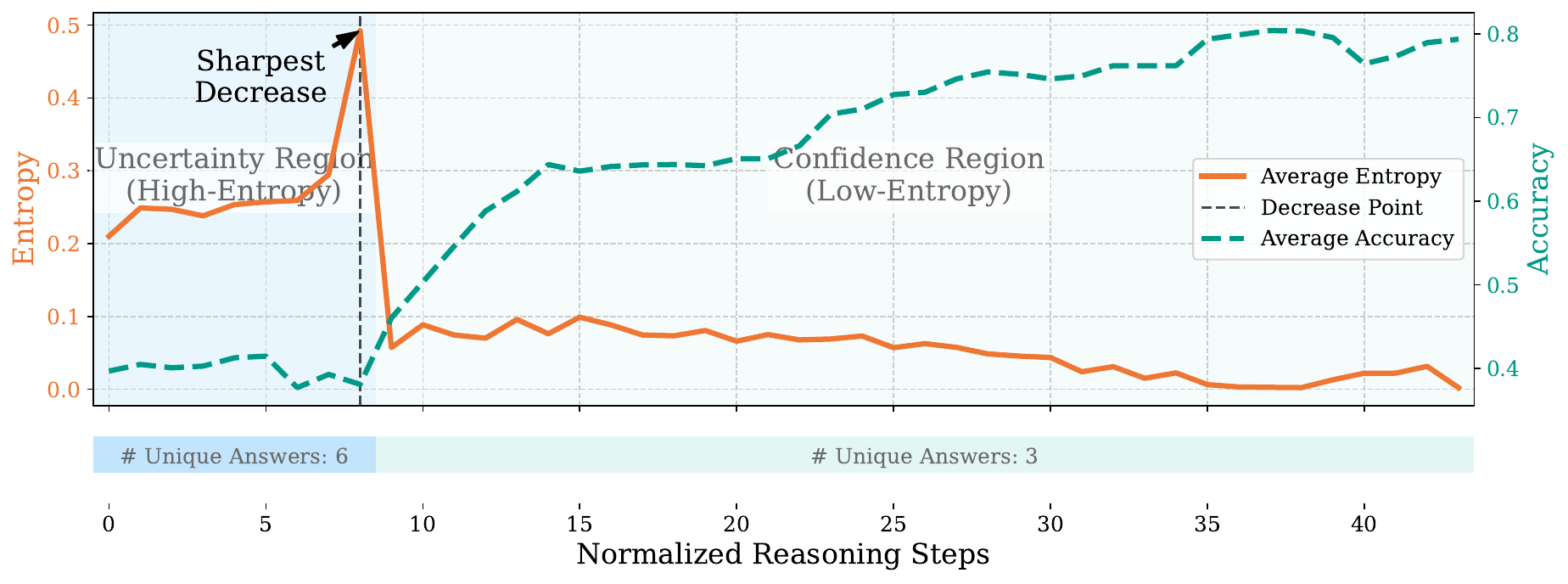}
    \caption{Entropy and Accuracy dynamics across reasoning steps for Qwen3-14B.}
    \label{fig:entropy-acc-dynamics-Qwen3-14B} 
\end{figure}

\begin{figure}[h]
    \centering
    \includegraphics[width=0.7\linewidth]{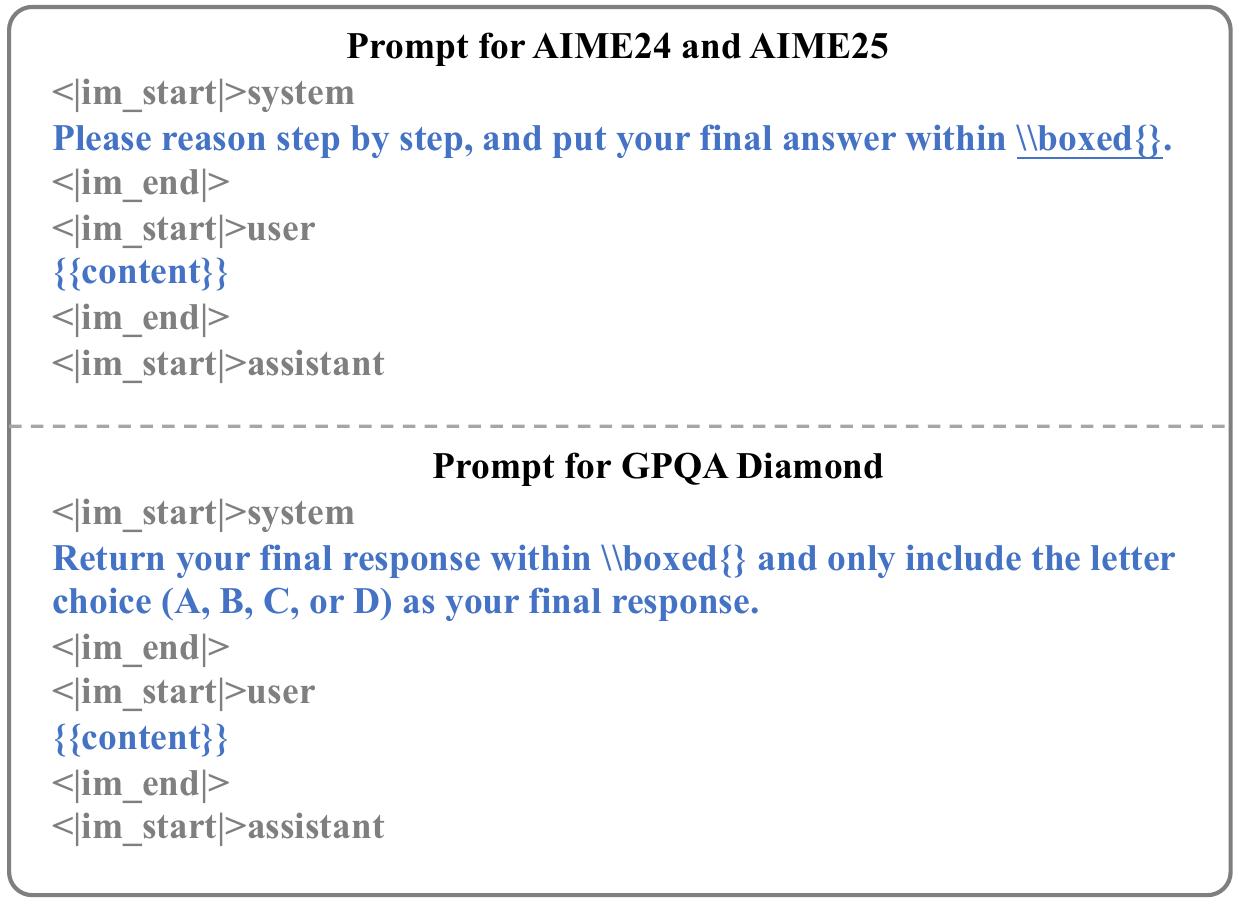}
    \caption{Prompts for different dataset.}
    \label{fig:prompt}
\end{figure}

\begin{figure}[h!] 
    \centering 
    \includegraphics[width=\textwidth]{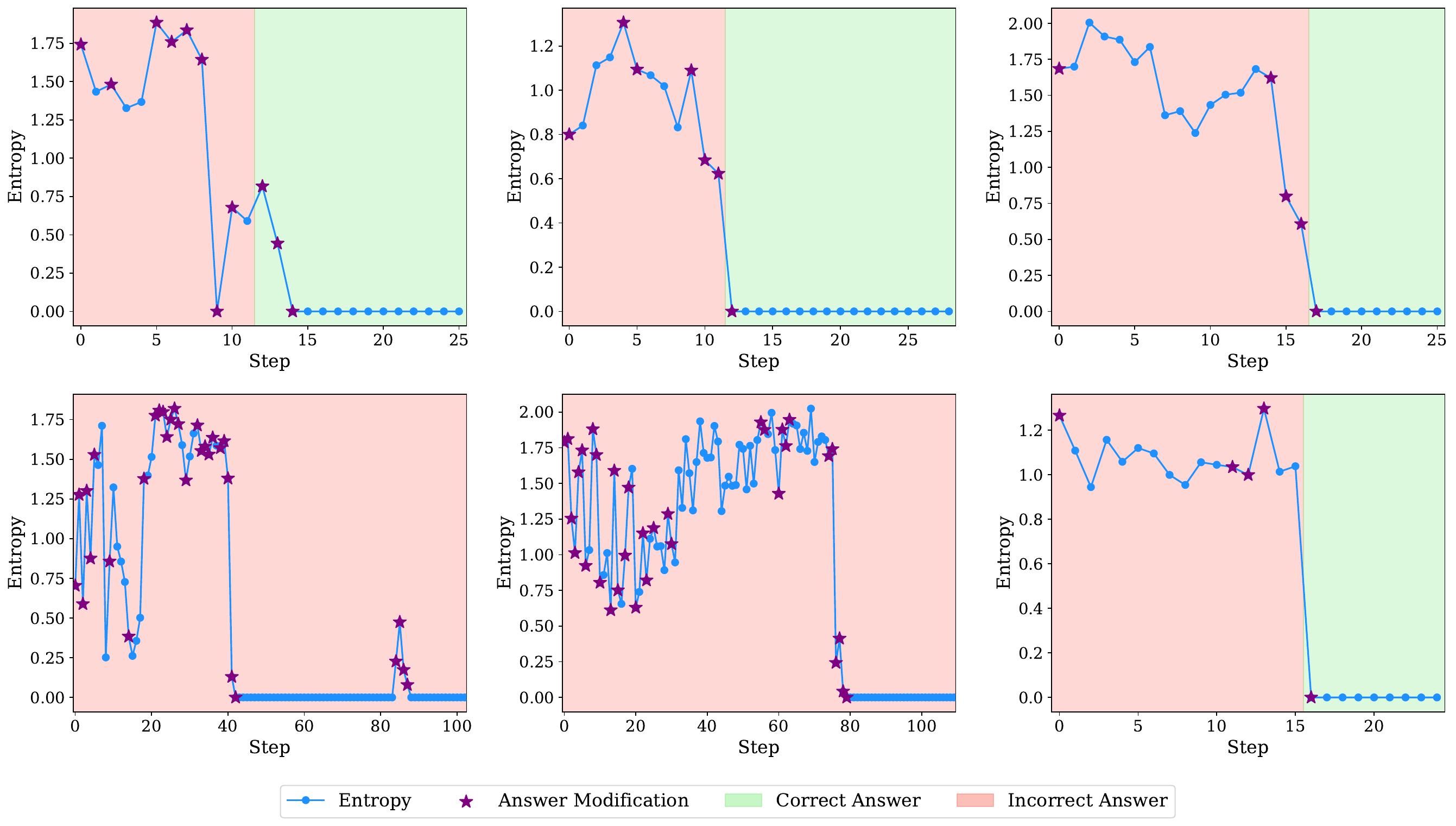}
    
    

    
    
    \caption{
    An illustration of CoT dynamics across six cases. 
    In each subplot, the \textcolor{blue}{blue line} tracks the model's entropy (y-axis) at each reasoning step (x-axis). 
    \textbf{Background shading} indicates the correctness of the intermediate answer when compared to the ground truth (\textcolor{green!50!black}{green} for correct, \textcolor{red!60!black}{red} for incorrect). 
    \textbf{Purple stars} {\color[HTML]{800080}\ding{72}} mark the exact steps where the model modifies its answer. 
    We consistently observe a pattern of the entropy: (1) an initial high-entropy plateau, (2) a sharp entropy decrease, and (3) a stable low-entropy plateau. This suggests that CoT reasoning bifurcates abruptly rather than evolving gradually from exploration to convergence.
    }
    \label{fig:entropy-case}
\end{figure}

\section{Author Contributions}\label{sec:contributions}

Ting Xu led the project, designed and conducted all experiments, and wrote the manuscript.
Xu He co-designed the research framework and provided technical guidance.
Yupu Lu developed the theoretical analysis including optimality guarantees in Section~\ref{sec:method}.
Jiankai Sun provided feedback on method development and technical guidance.
Dong Li, Wai Lam, and Jianye Hao offered research direction and feedback on experimental design.
All authors reviewed and approved the final manuscript.

\section{The Use of LLMs}
In the preparation of this manuscript, LLMs were utilized as a writing assistant to enhance clarity, refine phrasing, and improve overall readability. Specifically, LLM tools were employed for:

\begin{itemize}
    \item Grammar and Style Refinement: Identifying and correcting grammatical errors, improving sentence structure, and suggesting more concise or academic phrasing.
    \item Vocabulary Enhancement: Proposing alternative word choices to avoid repetition and enrich the lexical diversity of the text.
    \item Conciseness: Restructuring sentences or paragraphs to ensure that arguments are presented in a clear and succinct manner.
    
\end{itemize}
It is important to note that LLMs were used solely for \textbf{editorial support and linguistic enhancement}. All research ideas, experimental design, data analysis, interpretation of results, and the core scientific content presented in this paper are the original work of the authors. The LLM's role was strictly limited to improving the articulation of these original contributions, not to generate any scientific content or insights. The authors have meticulously reviewed and approved all LLM-assisted revisions to ensure accuracy and alignment with their intended meaning.

\section{Reproducibility Statement}

To facilitate the reproducibility of our findings, we provide a detailed account of our methodology, implementation, and experimental setup. The core logic and formulation of our proposed method, CUSUM, are described in Section \ref{sec:method}. 
For implementation-specific details, the exact prompts used for different dataset is in Figure \ref{fig:prompt} in the Appendix. A comprehensive list of all hyperparameters, including the probing interval ($k$) and CUSUM threshold $h$ used for each model, is provided in Table \ref{tab:probing-interval} and \ref{tab:hyper-param}. All datasets used in our evaluation (AIME25, AIME24, and GPQA) are publicly available. To ensure the stability and robustness of our findings, all experiments were repeated multiple times, and the averaged results are reported. Our source code will be made publicly available upon acceptance of this manuscript.

\end{document}